\documentclass[runningheads]{llncs}
\usepackage[T1]{fontenc}
\usepackage{graphicx}
\usepackage[hidelinks]{hyperref}

\usepackage{color}

\usepackage{algorithm}
\usepackage{algorithmic}

\usepackage{amsmath,amssymb,paralist}
\usepackage[capitalize]{cleveref}
\usepackage{booktabs, multirow}
\usepackage{graphicx}
\usepackage{caption}
\usepackage{float} 
\usepackage{tikz}
\usetikzlibrary{arrows,arrows.meta, automata, positioning, shapes.geometric}
\usepackage{pifont}
\usepackage{xspace}
\usepackage{textcomp}
\usepackage{thmtools}
\usepackage{makecell}

\usepackage{longtable}
\usepackage{booktabs}
\usepackage{multirow}
\usepackage{adjustbox}

\usepackage{array}
\newsavebox{\dummy}
\usepackage{siunitx}
\newcolumntype{d}{S[round-mode=places,round-precision=3,table-format=4.3]}

\usepackage{tcolorbox}
\tcbuselibrary{breakable}

\newtcolorbox{rexample}{
  breakable,
  colback=gray!10,
  colframe=black,
  boxrule=0pt,        
  toprule=1pt,      
  bottomrule=1pt,
  bottomrule at break=0pt,
  toprule at break=0pt,
  rounded corners,
}

\date{}

\usepackage{subcaption}
\usepackage{todonotes}
\usepackage{soul}
\usepackage{wrapfig}
\usepackage{orcidlink}

\newcommand{\distribution}{\mathsf{Dist}}
\newcommand\del[1]{}
\newcommand\ao{\ensuremath\mathcal{O}^A}
\newcommand\mco{\ensuremath\mathcal{O}^{M}}

\newsavebox{\fmbox}
\newenvironment{fmpage}[1]
     {\begin{center}\begin{lrbox}{\fmbox}\begin{minipage}{#1}}
     {\end{minipage}\end{lrbox}\fbox{\usebox{\fmbox}}\end{center}}

\newcommand{\mdpstates}{S}
\newcommand{\mdpinitstates}{s_0}
\newcommand{\mdpactions}{A}

\newcommand{\mdptransitions}{P}
\newcommand{\mdppath}{\rho}
\newcommand{\mdpstrategy}{\sigma}

\newcommand{\mc}{\mathcal{M}}
\newcommand{\mcstates}{S}
\newcommand{\mctrans}{P}
\newcommand{\mcinitstates}{s_0}

\newcommand{\cylinder}{\mathsf{Cyl}}

\newcommand{\pomdp}{\mathcal{P}}
\newcommand{\observationmap}{O}
\newcommand{\observations}{Z}
\newcommand{\belief}{\mathcal{B}}

\newcommand{\history}{h}
\newcommand{\histories}{\mathsf{Hist}_{\pomdp}}

\newcommand{\discount}{\lambda}
\newcommand{\mdpreward}{r}

\newcommand{\DRew}{\textsf{Rew}}

\newcommand{\fsc}{\mathcal{F}}
\newcommand{\fscstates}{N}
\newcommand{\fscactionmap}{\gamma}
\newcommand{\fscinitstate}{n_0}
\newcommand{\fsctransitionmap}{\delta}

\newcommand{\hypothesis}{\mathcal{H}}

\newcommand{\lstartab}{\ObsTable}

\DeclareDocumentCommand{\post}{D<>{} O{} D(){}}{\mathsf{Post}_{#1}^{#2}\ifthenelse{\isempty{#3}}{}{(#3)}}

\renewcommand{\Pr}{\mathbb{P}}

\renewcommand{\path}{\rho}

\renewcommand{\S}{\mathsf{S}}
\newcommand{\E}{\mathsf{E}}
\newcommand{\T}{\mathsf{T}}

\newcommand{\until}{\;\mathcal{U}\,}
\newcommand{\threshold}{\alpha}

\newcommand{\supp}{\mathsf{supp}}

\newcommand{\N}{\mathbb{N}}

\newcommand{\despot}{\textsc{Despot}\xspace}

\newcommand{\bad}{\mathsf{Bad}}
\newcommand{\good}{\mathsf{Good}}

\newcommand{\cplus}{\textsc{Cplus}\xspace}
\newcommand{\cplusds}{\textsc{Cplus}(\textsc{D}, \textsc{S})}
\newcommand{\cplusss}{\textsc{Cplus}(\textsc{S}, \textsc{S})}
\newcommand{\paynt}{\textsc{Paynt}\xspace}
\newcommand{\storm}{\textsc{Storm}\xspace}
\newcommand{\aalpy}{\textsc{Aalpy}\xspace}

\newcommand{\tsp}{\mathcal{G}}
\newcommand{\prph}{\Pr_{\pomdp\times\hypothesis}}
\newcommand{\prps}{\mathbb{P}_{\pomdp}^{\sigma}}

\newcommand{\expsp}{\mathbb{E}_{\pomdpr}^{\sigma}}
\newcommand{\cex}{\mathcal{R}}

\newcommand{\init}{o_0}
\newcommand{\memb}{\textsf{AQ}}
\renewcommand{\equiv}{\textsf{MCQ}}
\newcommand{\ce}{\mathcal{R}}

\newcommand{\MCsafety}{\Pr_{\mc}(\Box\neg\bad)>\threshold}
\newcommand{\Hyposafety}{\Pr_{\pomdp\times\hypothesis}(\Box\neg\bad)>\threshold}

\newcommand{\dc}{\texttt{x}}
\newcommand{\dct}{\ \texttt{x}\ }
\newcommand{\ObsTable}{\mathcal{T}}
\crefname{problem}{Problem}{}

\newcommand{\pomdpr}{\pomdp^r}

\newcommand{\redcell}{\mathbin{\,\textcolor{red}{\blacksquare}\,}}
\newcommand{\graycell}{\mathbin{\,\textcolor{gray}{\blacksquare}\,}}
\newcommand{\bluecell}{\mathbin{\,\textcolor{blue}{\blacksquare}\,}}

\newcommand{\ua}{\ \uparrow\ }
\newcommand{\da}{\ \downarrow\ }
\newcommand{\ra}{\rightarrow~}

\newif\ifhidecolumn
\hidecolumnfalse  
\newcolumntype{H}{>{\ifhidecolumn\begin{lrbox}{\dummy}\fi}c<{\ifhidecolumn\end{lrbox}\fi}@{}}

\title{Synthesizing POMDP Policies: Sampling Meets Model-checking via Learning}

\author{%
Debraj Chakraborty\inst{1}\orcidlink{0000-0003-0978-4457} \and 
Anirban Majumdar\inst{2}\orcidlink{0000-0003-4793-1892} \and
Prince Mathew\inst{3}\orcidlink{0000-0001-6410-1474} \and \\
Sayan Mukherjee\inst{4}\orcidlink{0000-0001-6473-3172} \and 
Jean-Fran\c{c}ois Raskin\inst{3}\orcidlink{0000-0002-3673-1097}
}%
\institute{
Nanyang Technological University, Singapore\\ \email{debraj.chakraborty@ntu.edu.sg} \and
Tata Institute of Fundamental Research, Mumbai, India\\
\email{anirban.majumdar@tifr.res.in} \and
Universit\'{e} Libre de Bruxelles, Brussels, Belgium\\
\email{\{prince.mathew,jean-francois.raskin\}@ulb.be}\and
 IITB Trust Lab, Department of CSE, IIT Bombay, Mumbai, India\\
 \email{sayan@cse.iitb.ac.in}
}

\titlerunning{Synthesizing POMDP Policies}
\authorrunning{Chakraborty et al.}

\begin{document}

\maketitle
  
\begin{abstract}
Partially Observable Markov Decision Processes (POMDPs) are the standard framework for decision-making under uncertainty. While sampling-based methods scale well, they lack formal correctness guarantees, making them unsuitable for safety-critical applications. Conversely, formal synthesis techniques provide correctness-by-construction but often struggle with scalability, as general POMDP synthesis is undecidable. To bridge this gap, we propose a synthesis framework that integrates sampling, automata learning, and model-checking. Inspired by Angluin’s $L^*$ algorithm, our approach utilizes sampling as a membership oracle and model-checking as an equivalence oracle. This enables the synthesis of finite-state controllers with formal guarantees, provided the sampling-induced policy is regular. We establish a relative completeness result for this framework. Experimental results from our prototypical implementation demonstrate that this method successfully solves threshold-safety problems that remain challenging for existing formal synthesis tools. We believe our algorithm serves as a valuable component in a portfolio approach to tackling the inherent difficulty of POMDP synthesis problems.
\end{abstract}

\section{Introduction}

Partially Observable Markov Decision Processes (POMDPs) are a fundamental model for decision-making under uncertainty and partial observability. They formalize a wide range of real-world control problems in which the environment evolves stochastically, and the controller receives only indirect, noisy, or incomplete observations of the underlying state. As a result, they are extensively studied in fields such as robotics, planning, and artificial intelligence, and offer a powerful framework for formulating the reactive controller synthesis problem in complex and uncertain environments.

Despite their expressive power and practical importance, tool support for controller synthesis in POMDPs remains limited. This is primarily due to the inherent algorithmic complexity of the model. Even the most basic decision problems related to synthesis -- existence of a controller that ensures safety (\emph{i.e.}, avoiding bad states) with a probability above a given threshold -- are undecidable in general~\cite{madani1999undecidability}, ruling out the possibility of complete algorithmic solutions. As a consequence, most practical approaches rely on sampling-based techniques, such as Monte Carlo Tree Search (MCTS)~\cite{kocsis2006bandit} and its variants. These methods are scalable and often yield high-quality policies in practice, but they offer no static correctness guarantees and are therefore unsuitable for safety-critical applications. Moreover, such techniques can be challenging to deploy in practice when they impose unacceptable runtime overhead. 
In contrast, formal methods provide correctness by construction, typically through static (\emph{i.e}., pre-deployment) synthesis techniques and logical specification languages, but they generally do not scale to POMDPs of realistic size or complexity.
In this work, we propose to bridge the gap between sampling-based methods and formal methods by combining them through a learning technique inspired by the $L^*$ algorithm~\cite{Angluin87}. We call our framework \emph{Counterexample-guided Policy Learning Using Sampling} (\cplus). The key insight here is that the synthesis of an observation-based policy can be framed as a learning problem.

In the terminology of $L^*$, \cplus employs membership queries -- called \emph{action queries} -- that are answered by an \emph{action oracle} ($\ao$) 
\del{-- in our case, the \despot{} algorithm, although any off-the-shelf sampling algorithm could be used.} 
that, given a history of observations and actions, returns an optimal action to take from that position. 
Sampling-based POMDP analysis techniques are well-suited to answer these kinds of queries.
The answers to these queries are then generalised to construct the hypotheses -- finite-state controllers (FSCs) represented as Mealy machines. \del{-- through observation tables that are closed and consistent \prm{not very easy to understand at this point} w.r.t. the membership queries \rmv{(answered by \despot{})}{} up to that point, and by generalizing them.}

Once a hypothesis has been constructed, it is passed on to a \emph{model checking oracle} ($\mco$), that verifies whether the candidate controller satisfies the given specification -- these are called \emph{model-checking queries}.
\del{Finally, the algorithm terminates with a Mealy machine that is \emph{equivalent up to model-checking} with the underlying oracle that we use to answer the membership queries.}

If the model checking oracle validates the hypothesis, \cplus terminates with a verified solution, providing the strong guarantees of formal methods. Otherwise, it returns a 
set of traces that violate the specification. These traces are then analyzed to identify a prefix where the candidate controller disagrees with {$\ao$}. 
This prefix, called the \emph{counterexample}, is incorporated (as in $L^*$) into the learning algorithm to refine the hypothesis, and the process is repeated.

While the use of a sampling tool alone provides no static formal guarantees, in our framework it is used exclusively to answer membership queries, with correctness ensured entirely by the subsequent model-checking phase. Importantly, the decision problem for model-checking a candidate policy is tractable, as it reduces to model-checking a Markov chain -- the product of the POMDP and the FSC -- against an infinite-duration safety property. This  can be solved in polynomial time w.r.t. the size{s} of the POMDP and the FSC representing the candidate policy, in contrast to the undecidable nature of the general synthesis problem.

Although due to the undecidability of POMDP synthesis, full completeness is unattainable, we establish a {\em relative completeness} result. First, we prove that finite-state policies always suffice for \emph{threshold-safety objectives} (Theorem~\ref{thm:fsc-existence}). Then, we show that if the policy implicitly defined by $\ao$ is both correct and regular (i.e., recognizable by a finite-state machine), then our algorithm is guaranteed to terminate and produce a correct solution (Theorem~\ref{thm:rel-completeness}). In this case, we obtain two important advantages with respect to executing $\ao$ online. First, the solution produced by our procedure is certified by construction, prior to deployment. Second, the solution takes the form of a finite-state controller, which can be deployed and executed efficiently online, requiring only constant-time updates -- unlike online sampling methods, which may incur latency due to their computational overhead. 

To demonstrate the potential of our approach, we implemented a prototype of \cplus in \textsc{Python}.
For the action oracle $\ao$, we rely on the online planner \despot~\cite{somani2013despot}, which can be queried on demand to compute locally optimal actions without requiring the precomputation of a full policy.
For the model checking oracle $\mco$, our implementation uses \storm~\cite{STORM22}.
The implementation is modular. In particular, other planners -- either sampling-based, or based on formal methods, as discussed later -- can be used as $\ao$. Similarly, other probabilistic model checkers, such as {\sc Prism}~\cite{prism} or {\sc Modest}~\cite{DBLP:conf/tacas/HartmannsH14}, can be used as $\mco$ without modifying the overall framework.

The experimental results are promising. Our implementation, which was developed to assess the potential of the framework, is able to solve threshold-safety problems that are challenging for existing formal-methods algorithms. At the same time, it provides strong formal correctness guarantees through model checking, which sampling-based approaches alone cannot provide. The resulting framework, which combines sampling, learning, and verification, is therefore both theoretically well-founded and practically relevant.

However, due to the undecidability of the POMDP synthesis problem, our approach should be seen as an additional method rather than a complete solution. In particular, it is best used as part of a portfolio of complementary techniques, together with other semi-algorithms based on formal methods. 

\vspace{-0.3cm}
\subsubsection{Related Works.}

The work that is most closely related to our approach is~\cite{WZL21}. This work addresses the learning of nondeterministic FSCs for bounded-horizon reachability objectives in POMDPs, \emph{i.e.}, reaching a set of states within a fixed number of steps with a probability exceeding a given threshold. In contrast to the infinite-horizon properties considered in our approach, bounded-horizon properties are decidable for POMDPs. Moreover, their method does not aim to solve the synthesis problem but to compute a most-permissive policy that can later be refined into a concrete controller. The search for a permissive policy entails an exponential blow-up in the action space, which does not occur in our framework. Their approach is illustrated only on a single, hand-crafted example, and to the best of our knowledge, there is no implementation available for this.

Online algorithms for policy computation include tree search methods that determine policies on the fly. {For instance, the algorithm} POMCP~\cite{silver2010monte} integrates MCTS with particle filtering, while \despot~\cite{somani2013despot} employs deterministic sparse trees to improve scalability.
These approaches scale well but lack strong correctness guarantees, making them less suitable for safety-critical applications.

Classical offline POMDP solvers such as Perseus~\cite{spaan2005perseus} and SARSOP~\cite{kurniawati2008sarsop} use point-based methods~\cite{pineau2003point}, which approximate the value function over a selected subset of the belief space. On the other hand, methods based on partial exploration~\cite{bork2020verification,bork2022under} or abstraction~\cite{norman2017verification} of the underlying belief MDP have been implemented in probabilistic model checkers such as \storm~\cite{STORM22} and Prism~\cite{prism}. 

Inductive synthesis tools such as \paynt~\cite{andriushchenko2021paynt,andriushchenko2023search} explore the space of controllers by incrementally evaluating candidate solutions, but they struggle to scale as the required memory size increases.
{The algorithm described in }\cite{bork2024learning} applies an $L^*$ learning algorithm to synthesize an FSC from a policy precomputed using \storm. Our approach differs from theirs in two key aspects: $(i)$ we do not precompute the full policy but instead query an action oracle only when needed, and $(ii)$ we introduce a novel model-checker-based equivalence query that ensures correctness. Other approaches derive FSCs by extracting automata from {Recurrent Neural Network (RNN)}-based policies for POMDPs~\cite{carr2021task}, but these do not provide formal guarantees.

Black-box checking~\cite{PeledVY02,ShijuboWS21} is also related to our approach, as it combines automata learning with verification in a larger workflow. However, in black-box checking, a positive model-checking result on the learned hypothesis is not sufficient on its own, since one must additionally establish conformance with the underlying system. In contrast, in our setting, once the model checker returns true, the synthesized FSC is already a certified solution to the threshold-safety problem. Moreover, classical black-box checking targets deterministic reactive systems representable as Mealy machines, whereas our framework addresses stochastic, partially observable POMDPs, for which the general synthesis problem is undecidable.

The missing proofs in the article can be found in the Appendix.

\section{Preliminaries}

\label{sec:preliminaries}

A \emph{(discrete) probability distribution} on a countable set $S$ is a function $d: S \to [0,1]$ that satisfies $\sum_{s \in S} d(s) = 1$.
We denote the set of all probability distributions on the set $S$ as $\distribution(S)$. 
For $d \in \distribution(S)$, the \emph{support} of $d$ is $\supp(d) = \{s \in S \mid d(s) > 0\}$. 

\begin{definition}[Markov Chain] A {\em Markov chain (MC)} is a triple $\mc=(\mcstates,\mctrans,\mcinitstates)$ where $\mcstates$ is a countable set of states, $\mctrans : \mcstates \rightarrow \distribution(\mcstates)$ is the transition function and $\mcinitstates\in\mcstates$ is the initial state.
\end{definition} 
For states $s,s'\in S$, $\mctrans(s)(s')$ denotes the probability of moving from state $s$ to state $s'$ in a single step and we denote this probability as $\mctrans(s,s')$.
A \emph{finite path} $\path = s_0s_1\ldots s_i$ of length $i\ge 0$ is
a sequence of $i+1$ consecutive states such that for all 
$t\in[0,i-1]$, $s_{t+1}\in \supp(\mctrans(s_t))$.
Similarly, an infinite path is an infinite sequence $\path = s_0s_1s_2\ldots$ of states such that for all $t\in\N$, $s_{t+1}\in \supp(\mctrans(s_t))$.
The probability of a finite path $\mdppath = s_0s_1 \dots s_n$ in the MC $\mc$, denoted $\mc(\rho)$, is given by $\prod_{i=0}^{n-1} \mctrans(s_i, s_{i+1})$. 
For a finite path $\path$, let $\cylinder(\path)$ denote the set consisting of infinite paths $\path'$
such that $\path$ is a prefix of $\path'$.
{The set} $\cylinder(\path)$ is called the cylinder set of $\path$.
For a set of finite paths $R$, we define $\cylinder(R)=\bigcup_{\path\in R}\cylinder(\path)$.

We denote by $\Pr_{\mc}$ the probability measure over infinite paths induced by the Markov chain $\mc$ (see, e.g.~\cite{baier2008principles}, for a formal definition). Intuitively, $\Pr_{\mc}$ assigns to each measurable set of infinite paths (called \emph{events}) the probability that an execution of $\mc$ will follow a path in that set.

Given two sets of states $\good, \bad \subseteq \mdpstates$, we first define $\Diamond \good$  as the set of paths that eventually reach a state in $\good$: 
$\Diamond \good = \{s_0s_1\ldots\mid \exists i \geq 0\colon s_i\in \good\}.$
This defines a {\em reachability objective}.
We also define $\Box \neg\bad$ as the set of paths that avoid all states in $\bad$:
$\Box \neg \bad = \{s_0s_1\ldots\mid \forall i \geq 0 \colon s_i\not\in \bad\}.$
This defines a {\em safety objective}. 
Finally, we define $\neg\bad\until\good$ as the set of paths that reach a state in $\good$ while avoiding states in $\bad$: $\neg\bad\until\good =
\{s_0s_1\ldots\mid \exists i \geq 0 \colon s_i\in\good \text{~and~} \forall 0 \leq j<i \colon s_j\not\in \bad\}$.
We refer to them as {\em reach-avoid objectives}. 
All these events are measurable. 
In the definitions above, we liberally used the notations of temporal logics ($\Diamond$, $\Box$, and $\until$). An interested reader can see \cite{baier2008principles} for a formal introduction.

\begin{definition}[POMDP]
	A \emph{Partially Observable Markov Decision Process (POMDP)} is a tuple $\pomdp =(\mdpstates, \mdpactions, \mdptransitions, \mdpinitstates, \observations, \observationmap)$ 
	where $\mdpstates$ is a finite set of states, 
	$\mdpactions$ is a finite set of actions, 
	$\mdptransitions: \mdpstates\times \mdpactions\rightharpoonup \distribution(\mdpstates)$ 
    is a (partial) transition function,
	$\mdpinitstates\in\mdpstates$ is the initial state,
	$\observations$ is a finite set of observations, and $\observationmap:\mdpstates\rightarrow\observations$ is an observation function that maps each state to an observation\footnote{Equivalently, we can see an observation $z \in Z$ as the subset of states $\{ s \in \mdpstates \mid \observationmap(s)=z \}$ that share the observation $z$.}. 
\end{definition}

For states $s,s'\in S$, and $a\in \mdpactions$,
$\mdptransitions(s,a)(s')$ denotes the probability of moving from state $s$ to state $s'$ when taking the action $a$ and we denote this probability as $\mdptransitions(s,a,s')$.
If for some state $s\in \mdpstates$ and $a\in \mdpactions$, $\mdptransitions(s,a)$ is not defined, we say that action $a$ is not \emph{enabled} in state $s$.
Note that, when $\observations=\mdpstates$ and $\observationmap$ is the identity, then the POMDP is a classical fully observable MDP. 

In a POMDP $\pomdp$,
an \emph{infinite path} is a sequence $\mdppath = s_0a_0s_1a_1s_2\ldots$ such that for all $t \ge 0$, 
$s_{t+1}\in \supp(\mdptransitions(s_t,a_t))$. 
A \emph{finite path} $\mdppath = s_0a_0s_1\ldots s_i$ of length $i\ge 0$ is
a finite prefix of an infinite path.  
Note that multiple states may have the same observation under $\observationmap$, so the agent cannot in general infer the exact underlying state from its observations.
Thus in a POMDP, the agent observes only \emph{histories} (sequences of observations and actions), not the underlying state sequence.
Formally, a history is a finite sequence of observations and actions
$\history = z_0 a_0 \ldots a_{i-1} z_i$. We call a history \emph{valid}, if  there exists a finite path
$\mdppath = s_0 a_0 s_1 \ldots s_i$ with $z_j = \observationmap(s_j)$ for $0 \leq j \leq i$.
We denote the set of all (valid) histories of $\pomdp$ by $\histories$.  
With a slight abuse of notation, we also use a set of observations $Y\subseteq \observations$ to denote 
the set of states: 
$\{s\in\mcstates\mid \observationmap(s)\in Y\}$. 

\begin{example}
    \label{ex:running-example-pomdp}
Consider the behaviour of a robot in the grid depicted in \Cref{fig:running-example-pomdp}. 
This can be modelled as a POMDP $\pomdp$ as follows:
Each cell in the grid corresponds to a state in $\mdpstates = \{(x,y) \mid x \in \{0,1,2,3\}, y \in \{0,1,2\}\}$. 
The cells marked in red are \emph{bad} states that the robot must avoid.
It also must stay inside the grid (\emph{i.e.}, for example, action $\ua$ from $\bluecell$ leads to a \emph{bad} state).
The robot observes only the cell colour (gray, white, or blue) but cannot distinguish between cells of the same colour.
That is, the set of observations is 
$\observations = \{\redcell, \graycell, \bluecell\}$, 
The set of actions is 
$\mdpactions = \{\ua, \da, \ra\}$. 
The initial state is $\mdpinitstates = (0,1)$. 
The transition function is defined as follows: for each state $(x,y)$ and action $a\in \mdpactions$, the system moves to the intended neighbouring cell with probability $0.9$ and with probability $0.1$ remains at the same cell. 
\end{example}
\begin{figure}[t] 
\begin{minipage}{0.35\linewidth}
\centering
\begin{tikzpicture}[scale=0.7, every node/.style={minimum size=1cm, draw=gray!50, thick, rounded corners}]
\foreach \x/\y in {0/0,1/0,2/0,0/2,1/2,2/2,3/0} {
  \fill[red!70, draw=gray!70, rounded corners] (\x,\y-5) rectangle ++(1,1);
}
\foreach \x/\y in {3/2} {
  \fill[blue!70, draw=gray!70, rounded corners] (\x,\y-5) rectangle ++(1,1);
}
\foreach \x/\y in {0/1,1/1,2/1,3/1} {
  \fill[gray!70, draw=gray!70, rounded corners] (\x,\y-5) rectangle ++(1,1);
}
\foreach \x in {0,1,2,3} {
  \foreach \y in {0,1,2} {
    \draw[white, rounded corners] (\x,-\y-2) rectangle ++(1,-1);
  }
}

%
\draw[line width=2pt, rounded corners] (0,-2) rectangle (4,-5);
%
\foreach \x in {0,1,2,3} {
  \node[above, draw=none] at (\x+0.5,-6.5) {\x};
}
%
\foreach \y in {0,1,2} {
  \node[left, draw=none] at (0,\y-4.5) {\y};
}
\end{tikzpicture}
    \caption{A grid world representing the \Cref{ex:running-example-pomdp}. 
}
    \label{fig:running-example-pomdp}
    \end{minipage}\hfill  
\begin{minipage}{0.6\linewidth}
\centering
    \begin{tikzpicture}[scale=0.6,>=stealth',shorten >=1pt,auto,node distance=2cm,semithick]
  \tikzstyle{every state}=[minimum size=18pt,outer sep=2pt]

  \node[state,initial,initial text={},initial distance=0.7cm] (n0) {$n_0$};
  \node[state]         (n1) [right of=n0] {$n_1$};
  \node[state]         (n2) [right of=n1] {$n_2$};
  \node[state]         (n3) [right of=n2] {$n_3$};

  \path[->]
    (n0) edge[] node[above] {$\graycell / \ra$} (n1)
    (n1) edge[] node[above] {$\graycell / \ra$} (n2)
    (n2) edge[] node[above] {$\graycell / \ra$} (n3) 
    (n3) edge[loop above]   node [align=center]       {$\graycell / \ua$} (n3)
    (n3) edge[loop below]   node [align=center]       {
    $\bluecell / \da$} (n3);
    \fill[gray!0] (1,-3.1) circle (.1pt); 
\end{tikzpicture}
\caption{A finite-state controller for the POMDP in \Cref{ex:running-example-pomdp}.}
\label{fig:fsc-running-example}
\end{minipage}
\vspace{-0.4cm}
\end{figure}

{Policies resolve the nondeterminism in POMDPs by specifying, for each finite history, which action the agent should take next.} 
\begin{definition}[Policy] 
	A (deterministic\footnote{It is well known (see \cite[Lemma~1]{chatterjee2010randomness}) that for infinite duration safety property (and all objectives that can be reduced to it), randomization is not useful to play optimally; therefore, we focus solely on deterministic policies.}) policy for a {\upshape POMDP} $\pomdp$ is a function $\mdpstrategy : \histories\to\mdpactions$. 
\end{definition}

A policy $\sigma$ for a POMDP $\pomdp$ induces an infinite countable MC $\pomdp_{\sigma}$ defined as follows: states are pairs $(s,h)$ where $s\in\mdpstates$ is a state of the POMDP and $h\in\histories$ is a history that ends with the observation $\observationmap(s)$. The initial state is $(\mdpinitstates, \observationmap(\mdpinitstates))$.
The transition function is defined as: 
\small
\[
\mdptransitions^{\sigma}((s,h), (s',h'))
=
\begin{cases}
\mdptransitions(s,\sigma(h),s') & \text{if } h' = h \cdot \sigma(h) \cdot \observationmap(s'),\\
0 & \text{otherwise.}
\end{cases}
\]
\normalsize 
We denote the probability measure associated with this MC as $\prps$. 
We remove $\pomdp$ from the subscript when the POMDP is clear from the context.

The agent controlling the POMDP does not observe the exact state of the system during its execution but only the sequence of observations encountered. The agent maintains a {\em belief}, that is a distribution over the  states of the POMDP, indicating the probability of being in each 
state given the current history.

A particularly relevant class of policies in practice is finite-state policies (also called finite-state controllers), as they naturally give rise to implementations that can be executed efficiently online. 
In \cref{sec:problem-formulation}, we will further establish that such controllers are sufficient to provide solutions for safety threshold specifications.

\begin{definition}[Finite-State Controller]
	A \emph{finite-state controller} {\upshape(FSC)} for a {\upshape POMDP} $\pomdp =(\mdpstates, \mdpactions, \mdptransitions, \mdpinitstates, \observations, \observationmap)$ is a tuple $\fsc = (\fscstates, \fscactionmap, \fsctransitionmap, \fscinitstate)$ where $\fscstates$ is a finite set of nodes, $\fscactionmap: \fscstates \times \observations \rightarrow \mdpactions$ is the action mapping, $\fsctransitionmap:\fscstates \times \observations \rightarrow \fscstates$ is the transition function, and $\fscinitstate$ is the initial node.
\end{definition}

A finite-state controller $\fsc = (\fscstates, \fscactionmap, \fsctransitionmap, \fscinitstate)$ defines a policy $\mdpstrategy_{\fsc}$ in the POMDP $\pomdp =(\mdpstates, \mdpactions, \mdptransitions, \mdpinitstates, \observations, \observationmap)$ as follows: 
it starts from the initial node $n_0$.
At any step, if the current controller node is $n\in \fscstates$, and the current POMDP state is $s\in\mdpstates$ of the POMDP, the action suggested by $\fsc$ is $\fscactionmap(n,\observationmap(s))$. 
The FSC $\fsc$ then updates its current node to $n' = \fsctransitionmap(n,o)$, where $o$ is the newly received observation. The state of the POMDP is updated according to $\mdptransitions$.
Thus, $\pomdp$ and $\fsc$ induce a finite state Markov chain $\pomdp\times\fsc = (\mdpstates \times \fscstates, \mdptransitions^{\fsc}, (s_0,n_0))$ where
\small
\[\mdptransitions^{\fsc}((s,n),(s',n')) =
\begin{cases}
	\mdptransitions(s,\fscactionmap(n,\observationmap(s)),s') & \text{if } n'=\fsctransitionmap(n,\observationmap(s)),\\
	0 & \text{otherwise.}
\end{cases}
\]
\normalsize
We call a policy \emph{finite memory} or \emph{regular},
if it can be represented by an FSC.

\begin{example}
 Consider the POMDP from \Cref{ex:running-example-pomdp}. 
 The objective is to maximize the probability of avoiding the bad states forever.
 A possible policy is to move {\sf right} three times, and then move {\sf up} till the robot reaches the {\sf blue} cell 
 and then move {\sf down} till it reaches the {\sf gray} cell.
 This policy can be represented by an FSC with four nodes as depicted in \cref{fig:fsc-running-example}.
Starting from the cell $(0,1)$, this policy ensures that the robot reaches the cell $(3,1)$ with probability $0.729$ (as there is a probability of $0.9^3$ of moving right three times successfully), and from there it can stay in the safe region forever by moving {\sf up} when at the {\sf gray} cell, and moving {\sf down} from the {\sf blue} cell.  
\end{example}

\section{The Threshold-safety Problem}
\label{sec:problem-formulation}
Given a POMDP, our goal is to identify policies that enforce desirable properties. We first focus on two types of objectives: safety objectives and discounted-sum reward objectives. 

\begin{problem}[Threshold-safety problem]
  \label{prob:safe_synth}
  \vspace{-.4cm}
  \begin{fmpage}{1.028\linewidth} 
  \begin{tabular}{ll}
  {\bf Input}&: A POMDP $\pomdp$, a set of bad observations $\bad$ and a threshold $\threshold \in [0,1)$.\\
  {\bf Output}&: A policy $\sigma$ represented as an FSC s.t. $\prps (\Box \neg {\sf Bad}) > \alpha$, if it exists, \\ &\ \  and $\mathsf{None}$ otherwise. 
\end{tabular} 
  \end{fmpage}
\end{problem}

While it is well-known that the threshold-safety problem is undecidable~\cite{madani1999undecidability},~\Cref{thm:fsc-existence} establishes that \Cref{prob:safe_synth} is well-posed: the existence of a policy enforcing the safety threshold implies the existence of a finite-state {controller} achieving the same threshold. This result justifies our focus on synthesizing FSCs and also ensures that the problem is \emph{recursively enumerable}. 

\begin{restatable}{theorem}{fscExistence} 
\label{thm:fsc-existence}
    Given a {\upshape POMDP} $\pomdp$ with a set of bad observations ${\sf Bad} \subseteq Z$, and a threshold $\alpha \in [0,1)$, if there is a policy $\sigma$ such that $\prps (\Box \neg {\sf Bad}) > \alpha$, then there  exists a finite memory policy $\sigma_{\sf fm}$ such that $\mathbb{P}^{\sigma_{\sf fm}}_{{\cal P}} (\Box \neg {\sf Bad}) > \alpha$. 
\end{restatable}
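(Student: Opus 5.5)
The plan is to approximate the infinite-horizon safety probability by a finite-horizon one, and then implement a truncated version of $\sigma$ with finite memory. Let $p = \prps(\Box\neg\bad) > \alpha$ and set $\varepsilon = (p-\alpha)/2 > 0$. For $k\in\N$, write $\Box^{\le k}\neg\bad$ for the event that no bad state is visited among the first $k+1$ states. These events are decreasing in $k$, and their intersection is $\Box\neg\bad$. By continuity of the measure $\prps$ from above, $\prps(\Box^{\le k}\neg\bad)\downarrow p$. This alone does not help, since we need a lower bound on the infinite-horizon probability. The key observation is to pass to the limit once the run is in a region that can be kept safe forever.

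The refined argument proceeds as follows. Consider the \emph{almost-sure winning region} for safety on the support-belief level. Since the set of supports $2^{\mdpstates}$ is finite, the standard fixed-point computation on the finite knowledge game gives the set $W$ of supports $B\subseteq\mdpstates$ from which some finite-memory (indeed belief-support-based) policy $\tau_B$ guarantees $\Box\neg\bad$ with probability $1$. Every history $h$ has a current support $\mathsf{supp}(h)$, i.e.\ the set of states consistent with $h$. From these we form two events. The first is $G_k$: "within $k$ steps the support lies in $W$ and no bad state has been seen". The second is $\Box\neg\bad$.

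The main claim, which I expect to be the main obstacle, is
\[
\prps\bigl(\Box\neg\bad \setminus \textstyle\bigcup_k G_k\bigr)=0 .
\]
In words, almost every safe run under $\sigma$ eventually reaches a support in $W$. To prove this I would use a Lévy 0-1 argument. The conditional probability, under $\sigma$, of staying safe forever given the current history converges almost surely to the indicator of $\Box\neg\bad$. On a safe run it therefore tends to $1$, so it eventually exceeds $1-\eta$ for a fixed $\eta$ chosen via finiteness of the supports. I then need a lemma: there is $\eta>0$ such that if a support $B\notin W$, then every policy from any belief with support $B$ fails safety with probability at least $\eta$. This holds because outside $W$ the adversary (nature) can force $\bad$ with positive probability within $|2^{\mdpstates}|$ steps, and the minimum belief mass is bounded below along the finitely many relevant branching paths. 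The bound on belief masses is the delicate point, since beliefs can put arbitrarily small mass on states. If it fails, I would instead argue qualitatively: outside $W$, with positive probability bounded below via the support game over $|2^{\mdpstates}|$ steps weighted by the minimal transition probability, a bad state is hit.

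Given the main claim, choose $k$ with $\prps(G_k)\ge p-\varepsilon>\alpha$. Define $\sigma_{\sf fm}$ as follows. It follows $\sigma$ on all histories of length at most $k$, which is finitely many and thus a finite tree of memory states. At the first time the support lies in $W$, it switches to the finite-memory almost-surely-safe policy $\tau_B$. If $k$ steps pass without this happening, it plays arbitrarily. Memory is the finite history tree together with the finite memory of the $\tau_B$'s and the support tracker, so $\sigma_{\sf fm}$ is an FSC. Runs in $G_k$ then stay safe almost surely, which gives $\mathbb{P}^{\sigma_{\sf fm}}_{\pomdp}(\Box\neg\bad)\ge\prps(G_k)>\alpha$.
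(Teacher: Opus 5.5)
There is a genuine gap, and it sits where you suspected. Your main claim $\prps(\Box\neg\bad\setminus\bigcup_k G_k)=0$ is false. So is the uniform-$\eta$ lemma you want to derive it from, because a belief whose support lies outside $W$ can put arbitrarily small mass on the states that cause failure.

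Here is a counterexample. Take states $s_0,s_1,s_2,\bot$ and a single action. From $s_0$ the system moves to $s_1$ with probability $1-\epsilon$ and to $s_2$ with probability $\epsilon$. The states $s_1$ and $s_2$ share an observation $z$. State $s_1$ loops forever, while $s_2$ stays in $s_2$ with probability $\frac12$ and moves to the bad sink $\bot$ with probability $\frac12$.
\begin{itemize}
\item Under the unique policy, $\prps(\Box\neg\bad)=1-\epsilon$.
\item Every safe history has support $\{s_0\}$ or $\{s_1,s_2\}$, and neither is in $W$. Hence every $G_k$ is empty.
\item After $n$ observations $z$, the belief gives $s_2$ a positive mass of order $\epsilon 2^{-n}$. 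Its failure probability is exactly that mass, so there is no uniform $\eta$.
\end{itemize}

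Your qualitative fallback does not rescue this. The attractor argument in the support game only shows that for every policy $\tau$ and every $B\notin W$, \emph{some state} $s\in B$ has $\mathbb{P}^{\tau}_{s}(\Diamond\bad)\ge\eta:=p_{\min}^{2^{|\mdpstates|}}$. Weighting by $b(s)$ destroys uniformity.

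Your route is precisely the paper's. Your main claim is its Lemma~\ref{lem:safty-to-reach}: $W$ there is a set of beliefs, but a belief is almost-surely winning iff its support is, so the two coincide. The same example refutes that lemma, whose proof uses the bound $p_{\min}^{|\mdpstates|}$ without the factor $b(s)$. The theorem itself survives (in the example there is only one policy), but the switching condition must change.

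The repair keeps your L\'evy argument and replaces ``the support lies in $W$'' by ``the belief is concentrated on a set in $W$''.
\begin{enumerate}
\item Let $p_n(s)$ be the probability of $\Box\neg\bad$ from $s$ under the residual policy $\sigma[h_n]$. Then $Y_n=\prps(\Box\neg\bad\mid h_n)=\sum_s b_n(s)p_n(s)$, and $Y_n\to 1$ almost surely on safe runs because $\Box\neg\bad$ is determined by the observations.
\item Applying the per-state bound with $\tau=\sigma[h_n]$ shows that $A_n:=\{s\in\supp(b_n)\mid 1-p_n(s)<\eta\}$ belongs to $W$.
\item Markov's inequality gives $b_n(\supp(b_n)\setminus A_n)\le (1-Y_n)/\eta$.
\item Hence, for every $\delta>0$, almost every safe run reaches in finitely many steps a history whose belief gives mass at least $1-\delta$ to some $A\in W$. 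Let $G^\delta_k$ be the event that this happens within $k$ steps.
\item Switching there to $\tau_A$ gives safety with probability at least $b_n(A)\ge 1-\delta$. This holds because $\tau_A$ is almost-surely safe from each state of $A$ separately, whatever the remaining mass does.
\end{enumerate}
This yields $\mathbb{P}^{\sigma_{\mathsf{fm}}}_{\pomdp}(\Box\neg\bad)\ge(1-\delta)\,\prps(G^\delta_k)$. Now take $\delta<1-\alpha/p$ and then $k$ large. Finiteness of memory follows exactly as in your construction: histories of length at most $k$, plus finitely many finite-memory policies $\tau_A$.
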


To prove the statement in \cref{thm:fsc-existence},
we consider the belief MDP $\belief_{\pomdp}$ associated with the POMDP $\pomdp$: its set of states is the infinite set of
beliefs $b \in \distribution(\mdpstates)$, and for each action $a$ and
observation $z$, the successor belief $b'$ is given by the usual
Bayesian update based on $\mdptransitions$ and $\observationmap$, defined as follows: 
  \[
      \forall s \in \mdpstates \cap z: b'(s)=\frac{\sum_{s'\in \mdpstates} b(s') \cdot \mdptransitions(s',a,s)}{\sum_{(s',s'') \mid \observationmap(s'')=z} b(s') \cdot \mdptransitions(s',a,s'')}
  \]
\[
      \forall s \in \mdpstates \cap \overline{z}: b'(s)=0.
      \]

Let $W$ denote the set of belief states from which there exists a policy to ensure safety with probability $1$. Then we can show the following two lemmas (proof in Appendix~\ref{app:problem-formulation}). 

\begin{restatable}{lemma}{finFromW}
 \label{lem:fsc-suffice-safety}
    There exists a finite memory policy $\sigma$ such that for every belief state $b \in W$, $\mathbb{P}^{\sigma}_{\belief_{\pomdp}} (\Box \neg {\sf Bad} \mid \text{initial belief } b) = 1$.
\end{restatable}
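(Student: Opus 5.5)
The plan is to reduce almost-sure safety in the belief MDP to a finite, support-based problem. The key observation is that whether a belief $b$ lies in $W$ depends only on its support $\supp(b)$. If $b$ and $b'$ have the same support, then a set of paths has probability zero under $b$ exactly when it has probability zero under $b'$, for every fixed policy. This holds because the probability of any finite history under a policy is positive from $b$ iff it is positive from $b'$. Safety with probability $1$ means that no finite history reaching $\bad$ has positive probability. So $W$ is a union of classes $\{b \mid \supp(b) = B\}$ for $B$ in some family $\mathcal{W} \subseteq 2^{\mdpstates}$.

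First I would define the finite \emph{support game}. Its positions are subsets $B \subseteq \mdpstates$ with $B \cap \bad = \emptyset$, where $B$ is contained in a single observation class. From $B$, action $a$ is allowed only if it is enabled in every $s \in B$. The successor supports are then
\[
B_{a,z} = \{ s' \in z \mid \exists s \in B\colon \mdptransitions(s,a,s') > 0\},
\]
one for each observation $z$ for which this set is nonempty. This mirrors exactly the support of the Bayesian update defined above.

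Next I would show that $\mathcal{W}$ is the greatest fixed point of the operator
\[
X \mapsto \{B \in X \mid \exists a\ \forall z\colon B_{a,z} \neq \emptyset \Rightarrow B_{a,z} \in X \wedge B_{a,z}\cap \bad = \emptyset\}.
\]
This is a safety game on a finite graph. The inclusion of $\mathcal{W}$ in the fixed point holds because any almost-surely safe policy from $b$ must pick an initial action all of whose positive-probability successor beliefs are again almost-surely safe; this is the step where support invariance is used. The converse inclusion comes from the strategy below.

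The strategy is to choose, for each $B$ in the fixed point, one witnessing action $a_B$. The finite-memory policy keeps the current support $B$ as its memory. It has at most $2^{|\mdpstates|}$ nodes, and it updates deterministically via $B \mapsto B_{a_B,z}$ upon seeing observation $z$. By induction on history length, every history with positive probability keeps the true state inside the tracked support, and that support stays inside the fixed point. In particular it never meets $\bad$, so $\bad$ is avoided with probability $1$ from every $b \in W$. To make this a single policy that works uniformly for all $b \in W$, I initialize the memory with $\supp(b)$. When viewed from the POMDP, the initial support is determined by $s_0$.

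I expect the main obstacle to be the support-invariance step, that is, showing rigorously that almost-sure safety is a qualitative property depending only on supports. This includes making sure that partial enabledness of actions is handled correctly: choosing an action that is not enabled in some state of the support must count as a losing choice. I would handle it by arguing that the set of bad finite histories has positive probability from $b$ iff some single history in it does, and that positivity of a single history's probability is determined by supports alone.
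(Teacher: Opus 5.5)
Your proposal is correct and follows essentially the paper's route: the paper notes that almost-sure safety depends only on which finite prefixes have positive probability, so the stochastic choices can be treated as adversarial, and then cites the finite-memory result for partial-observation safety games of \cite{chatterjee2007algorithms}. Your support-invariance step is exactly that qualitative reduction, and your greatest-fixed-point support game, with the current support as memory, is the knowledge-based subset construction behind the cited result made explicit; it also gives the $2^{|\mdpstates|}$ memory bound and a single controller uniform over all $b \in W$, which is what the proof of Theorem~\ref{thm:fsc-existence} actually uses.
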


\begin{restatable}{lemma}{safeORinW}
    \label{lem:safty-to-reach}
    For any policy $\sigma$ in $\pomdp$, $\mathbb{P}^{\sigma}_{\belief_{\pomdp}} (\Box\neg {\sf Bad} \wedge \Box\neg W)=0$. 
\end{restatable}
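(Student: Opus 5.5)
The statement to prove is Lemma~\ref{lem:safty-to-reach}: for every policy $\sigma$, almost every run that stays safe forever eventually enters $W$. The plan is a standard ``positive probability of failure outside $W$ accumulates'' argument, but the bound must be uniform over the infinite belief space. I would first replace beliefs by their supports. Whether a belief $b$ lies in $W$ depends only on $\supp(b)$: almost-sure safety is a qualitative property, and the support of the Bayesian update depends only on $\supp(b)$, $a$ and $z$. Hence $W$ is a union of classes indexed by supports $B \subseteq \mdpstates$, and there are finitely many such classes, at most $2^{|\mdpstates|}$.

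Step 1 is a uniform escape bound. I claim there exist $k \in \N$ and $\varepsilon > 0$ such that, for every belief $b \notin W$ and every policy $\tau$ starting from $b$, the probability of reaching $\bad$ within $k$ steps is at least $\varepsilon$. For a support $B$ whose beliefs are not in $W$, the support-based game is finite: states are supports and moves are actions followed by the resulting observations. Failure of almost-sure safety there means the following. In the finite game over supports where the controller must avoid supports meeting $\bad$, the support $B$ lies outside the controller's (sure) winning region. I would work with the knowledge-based subset construction of Chatterjee et al.\ for qualitative POMDP safety, which the earlier Lemma~\ref{lem:fsc-suffice-safety} presumably also uses. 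The attractor of the ``unsafe'' supports is then reached within at most $k = 2^{|\mdpstates|}$ rounds, whatever the controller does. Each opponent move in that game is an observation/state transition that has positive probability. So from every state in $\supp(b)$ the bad event occurs within $k$ steps with probability at least $p_{\min}^k$, where $p_{\min}$ is the least positive transition probability. The probability from $b$ itself is then at least $\varepsilon = \min_s b(s)\cdot p_{\min}^k$, and that is the catch: $\min_s b(s)$ is not bounded below.

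This is the main obstacle, since the bound depends on how much mass $b$ places on the ``bad'' states of its support. I would handle it by using the precise characterization instead. From a support outside the winning region, the opponent forces $\bad$ (or a support meeting $\bad$) regardless of the controller's choices. That means every state in the support has a positive-probability path to $\bad$ under every action sequence of length at most $k$. To check this I would verify that the attractor is computed on sure-winning, with possible states treated adversarially but all states in the support included. If that holds, the bound $p_{\min}^k$ applies from every state of the support, hence from $b$, uniformly. If the characterization only yields ``some state'' rather than ``every state,'' the fallback is to use the fact that membership in $W$ is monotone in the support and refine the argument by induction.

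Step 2 is a Lévy 0--1 or Borel--Cantelli argument. Fix $\sigma$ and consider the rounds $[ik,(i+1)k)$. On the event that the belief is outside $W$ at time $ik$ and no bad state has been visited yet, Step 1 gives conditional probability at least $\varepsilon$ of hitting $\bad$ within that round. Therefore $\mathbb{P}^{\sigma}(\text{safe and outside } W \text{ at times } 0,k,\dots,nk) \le (1-\varepsilon)^n \to 0$. The event $\Box\neg\bad \wedge \Box\neg W$ is contained in every one of these events, so its probability is $0$.
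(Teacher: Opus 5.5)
Your support reduction is correct, and Step 2 would go through if Step 1 held. But Step 1 is false and cannot be repaired, because the lemma itself fails. The ``every state of the support'' characterization does not hold. For a support $B$ outside the winning region of the knowledge game, the attractor only guarantees that for every action \emph{some} observation leads to a support of lower rank. That observation may be produced by only part of $B$, and other states of $B$ may have no path to $\mathsf{Bad}$ at all. So the only bound available is your pointwise $\min_{s\in\supp(b)} b(s)\cdot p_{\min}^k$, and no induction on supports can make it uniform.

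Here is a counterexample to the lemma. Take states $s_0,s_1,s_2,t$ and a single action $a$, where $t$ is the bad sink. Let $s_0$ and $t$ have their own observations, and let $s_1,s_2$ share an observation $z$. Set $P(s_0,a)$ uniform on $\{s_1,s_2\}$, $P(s_1,a)(t)=P(s_1,a)(s_1)=\frac12$, and $P(s_2,a)(s_2)=1$. The unique policy is safe with probability $\frac12$, namely on the runs that move to $s_2$. On these runs $z$ is observed forever, and the belief after $n\ge 1$ steps gives mass $1/(2^{n-1}+1)>0$ to $s_1$. Its optimal safety probability is therefore $b_n(s_2)<1$, so $b_n\notin W$; also $\delta_{s_0}\notin W$. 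Hence $\mathbb{P}^{\sigma}_{\belief_{\pomdp}}(\Box\neg\mathsf{Bad}\wedge\Box\neg W)=\frac12$. No uniform $\varepsilon$ exists either, since $\mathbb{P}(\Diamond^{\le k}\mathsf{Bad}\mid b_n)\le b_n(s_1)\to 0$.

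The paper's proof has the same architecture as yours: a uniform $\eta=p_{\min}^{|\mdpstates|}$ from every $b\notin W$, followed by $(1-\eta)^k$ decay. It breaks at exactly the step you flagged, since it takes a path starting at some state of $\supp(b)$ and silently drops the factor $b(s)$. Your diagnosis is sharper than the paper's argument, but neither proves the statement as written.

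What does hold, and is all the proof of Theorem~\ref{thm:fsc-existence} needs, is a weaker claim. Let $\mathcal{W}$ be the downward-closed family of almost-surely winning supports and $m(b):=\max_{B\in\mathcal{W}} b(B)$. Then almost surely on $\Box\neg\mathsf{Bad}$ one has $m(b_n)\to 1$; in the example, $b_n(\{s_2\})\to 1$. This follows from five facts:
\begin{itemize}
\item L\'evy's 0--1 law: the conditional safety probability, and hence the optimal value $V(b_n)$, tends to $1$ on $\Box\neg\mathsf{Bad}$.
\item $V$ is upper semicontinuous, being an infimum of continuous finite-horizon values (by compactness of the policy space).
\item $V<1$ outside $W$, by your pointwise bound.
\item $m$ is continuous.
\item The belief simplex is compact.
\end{itemize}
Switching at a large depth $d$ to a sure-safe finite-memory policy for the heaviest winning sub-support then gives safety at least $\mathbb{E}^{\sigma}[m(b_d)]$. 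Its $\liminf$ as $d\to\infty$ is at least $\mathbb{P}^{\sigma}_{\pomdp}(\Box\neg\mathsf{Bad})$.
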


We now have all the ingredients to prove Theorem~\ref{thm:fsc-existence}.
\begin{proof}[of \cref{thm:fsc-existence}]
    Assume that there exists a policy $\sigma$ such that $\mathbb{P}^{\sigma}_{\pomdp} (\Box \neg {\sf Bad}) > \alpha$.
    From \cref{lem:safty-to-reach}, 
        $\mathbb{P}^{\sigma}_{\belief_{\pomdp}} (\Diamond W) \geq \mathbb{P}^{\sigma}_{\belief_{\pomdp}} (\Box\neg {\sf Bad} \wedge \Diamond W)
         = \mathbb{P}^{\sigma}_{\belief_{\pomdp}} (\Box\neg {\sf Bad}) - \mathbb{P}^{\sigma}_{\belief_{\pomdp}} (\Box\neg {\sf Bad} \wedge \Box\neg W) 
         = \mathbb{P}^{\sigma}_{\belief_{\pomdp}} (\Box\neg {\sf Bad}) > \alpha$. 
    Note that $\mathbb{P}^{\sigma}_{\belief_{\pomdp}} (\Diamond W) = \lim_{d \to \infty} \mathbb{P}^{\sigma}_{\belief_{\pomdp}} (\Diamond^{\leq d} W)$
    where $\Diamond^{\leq d} W$ denotes the set of paths that reach $W$ within $d$ steps.
    As $\mathbb{P}^{\sigma}_{\belief_{\pomdp}} (\Diamond^{\leq d} W)$ is non-decreasing in $d$, there must exist a finite depth $d$ such that $\mathbb{P}^{\sigma}_{\belief_{\pomdp}} (\Diamond^{\leq d} W) > \alpha$.

    Now we construct $\sigma_{\sf fm}$ as follows: 
    for all histories of depth at most $d$, if no belief state in $W$ has been reached yet, we play according to $\sigma$. 
    Whenever a belief state in $W$ is reached, we switch to the finite-memory policy that enforces safety almost surely from that belief state, which exists by Lemma~\ref{lem:fsc-suffice-safety}. 
    In the remaining case, that is, when we reach a depth greater than $d$ without having reached a belief state in $W$, we play an arbitrary action. 
    Clearly, this policy uses finite memory, as it depends only on the history of observations and actions up to depth $d$.
    And, since from any belief state in $W$, $\sigma_{\sf fm}$ ensures safety almost surely, we have
     \(\mathbb{P}^{\sigma_{\sf fm}}_{\pomdp} (\Box \neg {\sf Bad}) = \mathbb{P}^{\sigma_{\sf fm}}_{\belief_{\pomdp}} (\Diamond W) \geq \mathbb{P}^{\sigma}_{\belief_{\pomdp}} (\Diamond^{\leq d} W) > \alpha\,.
     \)\qed
\end{proof}

\paragraph*{\textbf{Threshold Discounted Reward Problem}.}
In the following, we use an online solver to compute locally optimal actions after a given history of observations and actions. However, existing online solvers are mainly designed to optimize {\em discounted-sum reward objectives}, and do not directly handle infinite-horizon safety properties. We therefore start by formally defining the synthesis problem for discounted-sum objectives. We then show how the threshold-safety synthesis problem can be related to this setting and {\em approximately} reduced to it.

For this, we introduce a few additional notations. A reward function \( r : Z \times A \rightharpoonup \mathbb{Q} \) assigns a reward to each observation-action pair. Let \( \rho = s_0 a_0 s_1 a_1 \dots \) be an infinite path. 
The \emph{total reward} along \( \rho \) is $\sum_{i=0}^{\infty} r({\cal O}(s_i), a_i)$. 
Given a \emph{discount factor} \( \lambda \in (0,1) \), the \emph{discounted reward} along \( \rho \), denoted \( \DRew^{\lambda}(\rho) \), is defined as
\(
\sum_{i=0}^{\infty} \lambda^i \cdot r({\cal O}(s_i), a_i).
\)

We call a POMDP $\pomdp$, when equipped with a reward function $r$, a \emph{reward POMDP}.
For a policy \( \sigma \) in $\pomdp$, we define its expected discounted reward,
denoted by $\mathbb{E}^{\sigma}_{\pomdp} (\DRew^{\lambda})$, 
as the expected value of the discounted rewards over all paths in the Markov chain $\pomdp_\sigma$ that is induced by \( \sigma \). 

\begin{problem}[Threshold discounted reward problem]
  \label{prob:disc_reward}
  \begin{fmpage}{0.95\linewidth}
  \begin{tabular}{ll}
  {\bf Input}&: A POMDP $\pomdp$, reward function $\mdpreward$, discount factor $\discount \in (0,1)$, and \\ &\ \ threshold $\threshold \in \mathbb{Q}$.\\
  {\bf Output}&: A policy $\sigma$ s.t. $\mathbb{E}^{\sigma}_{\pomdp} ({\DRew}^{\lambda}) > \alpha$, if it exists, and  $\mathsf{None}, $ otherwise.  
  \end{tabular}
  \end{fmpage}
\end{problem}

Note that this problem is also undecidable~\cite{madani1999undecidability}. Furthermore, it was shown in~\cite{alur2022framework} that safety (and reachability) objectives cannot be structurally represented as discounted reward objectives in an {\em exact} way. However, we show below that safety objectives can be approximated to any desired degree of accuracy, which is the best achievable outcome in this setting. We also argue that this approximation is sufficient in practice, as supported by our experimental results.

First, for safety objectives, we proceed with the following reduction. 
Given any {set} $\bad \subseteq \observations$, without loss of generality, we will assume that the \emph{unsafe} states (states whose observations are in $\bad$) are sinks.  
Then, from $\pomdp$, we construct a reward POMDP $\pomdpr$ as follows: add one additional state $q$, assign a new observation $\observationmap(q)$, redirect all outgoing transitions from the unsafe states in $\pomdp$ to $q$, and make $q$ a sink {state}. Define the reward function $r$ in $\pomdpr$ as follows: 
for every action $a$,  we assign a reward $-1$ to the observation-action pairs $(o, a)$ for $o\in \bad$, and assign $0$ to every other observation-action pair. It is straightforward to verify that, under this construction, every infinite path in $\pomdpr$ that never visits $\bad$ has a total reward $0$, while every infinite path that visits $\bad$ has a (undiscounted) total reward $-1$. 

In online samplers, the focus is not on the total reward, but rather on the discounted reward. However, when the discount factor $\lambda$ is close to $1$, the difference between the total reward and the discounted reward becomes negligible. Additionally, there is a bijection between paths in $\pomdp$ and $\pomdpr$, which can further be lifted to policies in the two POMDPs. This is because, after visiting a state in $\bad$, all transitions are deterministic in both (and the rest are the same). Combining these two observations, we get that, for all policies $\sigma$:
\begin{equation*}
    \prps(\Box \neg {\sf Bad})=1+\lim_{\lambda \rightarrow 1} \expsp({\sf \DRew^{\lambda}}) 
    = \lim_{\lambda \rightarrow 1}(1 + \expsp({\sf \DRew^{\lambda}}))\,.
\end{equation*}
This result implies that, in the limit, as $\lambda$ tends to $1$, maximizing the expected discounted reward in the $\lambda$-discounted POMDP $\pomdpr$ is equivalent to maximizing the probability of staying safe in $\pomdp$.
We formally state this fact below. 

As recalled above, the discounted-reward problem is undecidable. Nevertheless, sampling-based solvers can compute good, near-optimal policies in many practical settings.

\begin{restatable}{proposition}{safetyDrewEq}
\label{prop:safety-drew-eq}
For all {\upshape POMDPs} $\pomdp$, {and for} all safety objectives encoded by observations in {\sf Bad}, let $\pomdpr$ be the reward {\upshape POMDP} constructed as above. Then, for {any} threshold $\threshold \in [0,1)$ {and $\varepsilon>0$, }:
\begin{enumerate}
    \item[1.]  there exists a discount factor $\lambda \in (0,1)$, such that if there is a policy $\sigma$ satisfying 
$1+\mathbb{E}^{\sigma}_{\pomdpr}({\DRew}^{\lambda}) > \threshold$, then $\mathbb{P}^{\sigma}_{\pomdp}(\Box \neg {\sf Bad}) > \threshold- \varepsilon$;
\item[2.] 
if there exists a policy $\sigma$ such that $\mathbb{P}^{\sigma}_{\pomdp}(\Box \neg {\sf Bad}) > \threshold$, then, there exists a discount factor $\lambda \in (0,1)$ such that $1+\mathbb{E}^{\sigma}_{\pomdpr}({\DRew}^{\lambda}) > \threshold - \varepsilon$.
\end{enumerate}
\end{restatable}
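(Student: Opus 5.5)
The plan rests on an exact formula for the discounted value in $\pomdpr$. Fix a policy $\sigma$ and let $T$ be the (random) first time at which the run of $\pomdp_\sigma$ visits an observation in $\bad$, with $T=\infty$ if this never happens. In $\pomdpr$ the only non-zero reward on a path is the single $-1$ collected at time $T$: the next step goes to the sink $q$, which carries reward $0$. Hence $\DRew^{\lambda}=-\lambda^{T}\mathbf{1}_{T<\infty}$. Using the bijection between paths and policies of $\pomdp$ and $\pomdpr$ noted before the statement, we get
\[
1+\expsp(\DRew^{\lambda}) \;=\; \prps(\Box\neg\bad) \;+\; \mathbb{E}^{\sigma}_{\pomdp}\bigl[(1-\lambda^{T})\mathbf{1}_{T<\infty}\bigr].
\]
The correction term is non-negative. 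It is non-increasing in $\lambda$, and it tends to $0$ as $\lambda\to 1$ by dominated convergence. Both items will be read off from this identity.

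\textbf{Item 2.} This is the easy direction. If $\prps(\Box\neg\bad)>\threshold$, then the identity gives $1+\expsp(\DRew^\lambda)\ge\prps(\Box\neg\bad)>\threshold>\threshold-\varepsilon$ for every $\lambda\in(0,1)$, so any $\lambda$ works.

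\textbf{Item 1.} Here the quantifiers require a single $\lambda$ that works for every policy. I will split the correction term at a horizon $N$:
\[
\mathbb{E}^{\sigma}\bigl[(1-\lambda^{T})\mathbf{1}_{T<\infty}\bigr]\le (1-\lambda^{N})+\prps(N\le T<\infty).
\]
The first summand can be made at most $\varepsilon/2$ uniformly in $\sigma$ by choosing $\lambda$ close to $1$ once $N$ is fixed. What remains is to find one $N$ such that $\prps(N\le T<\infty)\le\varepsilon/2$ for every $\sigma$ satisfying the hypothesis $1+\expsp(\DRew^\lambda)>\threshold$.

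To obtain that uniformity, I would argue by contradiction using compactness. Deterministic policies form a compact space under the product topology on $\mdpactions^{\histories}$, by Tychonoff, since $\mdpactions$ is finite and $\histories$ is countable. Each map $\sigma\mapsto 1+\expsp(\DRew^{\lambda})$ is continuous, because discounting makes it a uniform limit of functions depending only on bounded-depth histories. The map $\sigma\mapsto\prps(\Box\neg\bad)$ is the infimum of the continuous maps $\sigma\mapsto\prps(T> n)$, so it is upper semicontinuous. Suppose $\lambda_k\to1$ and each $\sigma_k$ satisfies $1+\mathbb{E}^{\sigma_k}(\DRew^{\lambda_k})>\threshold$ but $\mathbb{P}^{\sigma_k}(\Box\neg\bad)\le\threshold-\varepsilon$. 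Pass to a convergent subsequence $\sigma_k\to\sigma^\ast$. Monotonicity in $\lambda$ together with continuity gives $1+\mathbb{E}^{\sigma^\ast}(\DRew^{\lambda_j})\ge\threshold$ for every $j$, and letting $j\to\infty$ yields $\mathbb{P}^{\sigma^\ast}(\Box\neg\bad)\ge\threshold$. The goal is to contradict this with $\mathbb{P}^{\sigma_k}(\Box\neg\bad)\le\threshold-\varepsilon$. I would also try to use \cref{lem:safty-to-reach} and \cref{lem:fsc-suffice-safety}, replacing $\sigma_k$ by a policy that switches to the almost-surely safe controller upon entering $W$.

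\textbf{Main obstacle.} This last step is exactly where the argument may break. Upper semicontinuity of the safety probability points in the wrong direction for the contradiction: it gives $\limsup_k \mathbb{P}^{\sigma_k}(\Box\neg\bad)\le\mathbb{P}^{\sigma^\ast}(\Box\neg\bad)$, which does not contradict the assumption. Moreover, policies that merely postpone the first visit to $\bad$ for a very long time keep the discounted value high while their safety probability stays low. I expect the uniform tail bound on $\prps(N\le T<\infty)$ to be the crux of item 1. I would first try to prove it, or to identify the extra hypothesis on $\pomdp$ under which it holds, and only then fix $N$ and $\lambda$ as above.
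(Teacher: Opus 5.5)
\textbf{Item 2.} Your argument is correct and sharper than the paper's, which only calls this case ``analogous'' to a limit argument. Your identity gives $1+\expsp(\DRew^{\lambda})\ge\prps(\Box\neg\bad)$ for every $\lambda\in(0,1)$, so any $\lambda$ works.

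\textbf{Item 1.} The gap you flagged is real, and no further argument can close it. With $\lambda$ fixed before $\sigma$ and the same $\sigma$ in hypothesis and conclusion, the claim is false, and your postponing intuition is exactly the counterexample. Take two states, each with its own observation: a safe state $s$ and a bad sink. Action $w$ keeps you in $s$, and action $g$ moves you to the bad sink. Let $\threshold=\tfrac12$ and $\varepsilon=\tfrac14$. For any $\lambda\in(0,1)$, choose $N$ with $\lambda^{N+1}<\tfrac12$, and let $\sigma_N$ play $w$ for $N$ steps and then $g$. Then
\[
1+\mathbb{E}^{\sigma_N}_{\pomdpr}(\DRew^{\lambda}) \;=\; 1-\lambda^{N+1} \;>\; \threshold, \qquad \mathbb{P}^{\sigma_N}_{\pomdp}(\Box\neg\bad) \;=\; 0 \;\le\; \threshold-\varepsilon .
\]
So the uniform tail bound on $\prps(N\le T<\infty)$ you were looking for does not exist. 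Since $\sigma_N$ has finite memory, restricting to regular policies does not help either.

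\textbf{What the paper actually proves.} The paper's proof does not get around this. It fixes $\sigma$, uses $1+\expsp(\DRew^{\lambda})\to\prps(\Box\neg\bad)$ as $\lambda\to 1$, and takes $\delta$ (hence $\lambda$) from this pointwise limit. Its $\lambda$ therefore depends on $\sigma$, so it establishes only the $\forall\sigma\,\exists\lambda$ version. Your identity gives that version at once: the correction term $\mathbb{E}^{\sigma}_{\pomdp}\bigl[(1-\lambda^{T})\mathbf{1}_{T<\infty}\bigr]$ tends to $0$ by dominated convergence and is non-increasing in $\lambda$. Hence, for each $\sigma$, the implication holds for all $\lambda\in[\lambda_\sigma,1)$.

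\textbf{A uniform version that is true.} Your compactness argument proves the correct uniform statement at the level of values rather than for a fixed policy. In your contradiction setup, the limit policy $\sigma^\ast$ satisfies $\mathbb{P}^{\sigma^\ast}_{\pomdp}(\Box\neg\bad)\ge\threshold$; in the example, $\sigma_N$ converges to ``always $w$''. This yields a $\lambda$ such that, if some policy has $1+\mathbb{E}^{\sigma}_{\pomdpr}(\DRew^{\lambda})>\threshold$, then some possibly different policy has safety probability $>\threshold-\varepsilon$. Either this form or the $\sigma$-dependent form above can be proved; item 1 as written cannot.
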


\section{The Framework \cplus}
\label{sec:cplus}
In this section, we present our policy learning framework, named \emph{Counterexample-guided Policy Learning Using Sampling} (\cplus), for the safety-synthesis problem that combines sampling and model-checking through active automata 
learning. 
A diagrammatic description is provided in \Cref{fig:pseudo-algo}. 
For the rest
of this section, we fix a finite POMDP $\pomdp = (\mdpstates, \mdpactions, \mdptransitions, \mdpinitstates, \observations, \observationmap)$ with a set of unsafe observations $\bad$ and a threshold $\threshold \in [0,1)$ inducing a safety specification (as formulated in Problem~\ref{prob:safe_synth}). 
Let $\pomdpr$ be the reward POMDP following the construction described in \cref{sec:problem-formulation}.

Our algorithm is an adaptation of a variant of Angluin's $L^*$ for Mealy 
machines~\cite{shahbaz2009inferring}.
The algorithm maintains a table $\lstartab$ throughout, whose rows and columns are sequences of observations of $\pomdp$. Formally, an observation table is a 
triple $\lstartab=(\S, \E, \T)$, where $\S$ is a nonempty prefix-closed set of observation sequences, $\E$ is a nonempty set of observation sequences, and $\T:(\S \cup \S\cdot\observations)\times  \E \to \mdpactions^*$ is a function that assigns a sequence of actions to observation sequences.
The observation table is initialized with $\S=\{\varepsilon\}$ and $\E = \observations$ (see Table~\ref{tab:running-iter-1}).

\begin{wraptable}{r}{0.4\textwidth}
\centering
\begin{tabular}{p{.5cm}|p{.5cm}|p{.5cm}|p{.5cm}|p{.5cm}|}
    &  & $\redcell$ & $\graycell$ & $\bluecell$ \\
    \hline 
    \hline 
\multirow{1}{*}{\rotatebox{90}{$\mathsf{\S}$}}   
         & $\varepsilon$ & \dct & $\ra$ & \dct \\ 
         \hline
         \hline
\multirow{4}{*}{\rotatebox{90}{\hspace{.25cm}$\mathsf{\S}\cdot \observations$}}   
         & $\redcell$  & \dct & \dct & \dct \\ \cline{2-5} 
         & $\graycell$ & \dct & $\ra$ & \dct \\ \cline{2-5}
         & $\bluecell$ & \dct & \dct & \dct \\ 
\hline
\end{tabular}
\caption{Initial observation table for the POMDP in \Cref{fig:running-example-pomdp}.}
\label{tab:running-iter-1}
\end{wraptable} 
For a row $s = z_1^1 z_1^2  \dots  z_1^m$ and a column 
$e = z_2^1  z_2^2  \dots  z_2^n$, the value of $\T(s, e) $
is a sequence of actions $a_2^1  a_2^2  \dots  a_2^n$.  
Note that,  $|e| = |\T(s, e)|$.  
Further, from the row $s$, the column $e$, and the value $\T(s,e)$, we can obtain an action-observation sequence $z_1^1 a_1^1 z_1^2 a_1^2 \dots z_1^m a_1^m z_2^1 a_2^1 z_2^2 a_2^2 \dots z_2^n a_2^n$, 
where $a_1^i = \T(z_1^1 z_1^2 \dots z_1^{i-1}, z_1^i)$, for $1 \le i \le m$ (with the convention that $z_1^0 = \varepsilon$).
One can observe that, these values already exist in $\lstartab$, since $\S$ is prefix-closed with $z_1^1 z_1^2 \dots z_1^{i-1} \in \S \cup \S\cdot\observations$, and $z_1^i \in \observations \subseteq \E$.  
Lastly, suppose $\E=\{e_1,\ldots,e_{\ell}\}$ for some $\ell\in\N$. Then, for any $s\in\S\cup\S\cdot\observations$, we write $row(s)$ to denote the tuple $(\T(s,e_1),\T(s,e_2), \ldots, (\T(s,e_\ell))$. 

We use two kinds of queries to construct the observation table: \emph{action queries} 
and \emph{model-checking queries}, as described below.  

\begin{itemize}
\item \emph{Action Queries} (\memb): given a sequence of observations and actions $h$, if it is a valid history in $\pomdp$, we query {an action oracle} ($\ao$) to identify the action $a$ which is optimal\footnote{We use a cache to store the action queries instead of querying $\ao$ each time; this avoids selecting different actions for the same history on different occasions.} to take after $h$. 
If $h$ is not a valid history, we return a special symbol `\dc' representing `don't care'. 
The validity of a history can be checked using classical graph theoretic analysis on the POMDP.

\item \emph{Model-Checking Queries} (\equiv): given an FSC $\hypothesis$, we query  
a model-checking oracle ($\mco$)
to verify if $\hypothesis$ satisfies the safety specification, \emph{i.e.}, whether $\Hyposafety$ holds. The oracle $\mco$ either returns \emph{yes} (if the above holds), or a \emph{finite} set of paths in $\pomdp \times \hypothesis$ (called \emph{counterexample}) that are unsafe and have a cumulative probability of at least $1-\threshold$.
\end{itemize}

We now elaborate on different steps of our framework \cplus. 
\begin{figure}[t]
    \centering
    \includegraphics[width=.85\linewidth]{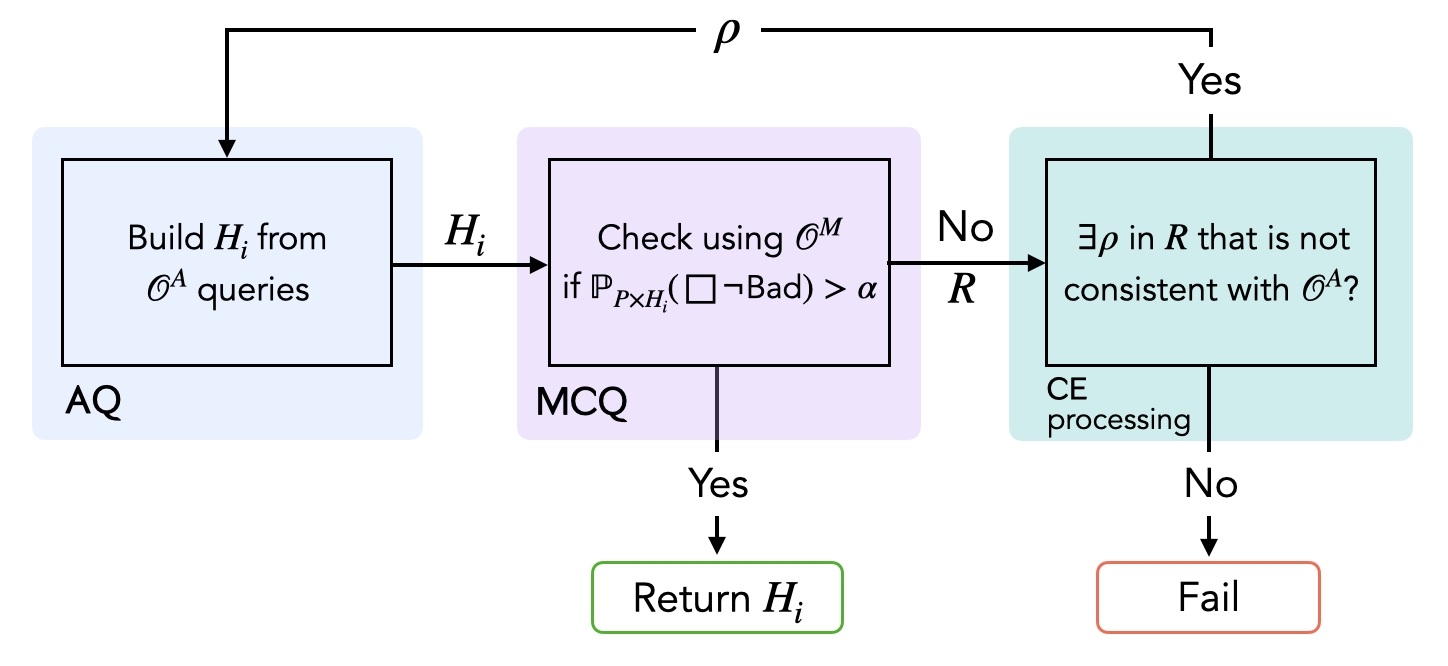}
    \caption{Overview of our policy learning framework \cplus. }
    \label{fig:pseudo-algo}
\end{figure}

\paragraph*{Action Queries via Sampling.}
\label{sec:membership}

Given a \emph{valid} history $h$ of observations and actions, we use a sampling-based $\ao$ to compute an action $a$ to play after $h$ in order to maximize the expected discounted reward of the associated POMDP $\pomdpr$. 
Thanks to~\Cref{prop:safety-drew-eq}, in the following, we will assume that $\ao$ is trying to maximize the probability of satisfying the safety specification in $\pomdp$. 

\begin{remark}
We can also rely on formal methods to construct an action oracle. For instance, we can build a finite abstraction of the belief MDP associated with the POMDP and then compute an optimal memoryless policy that maximizes safety on this abstraction. This can be done, for example, using \storm. The resulting policy can then be queried to answer action requests. In the experimental section, we show that using such an oracle enables \cplus to learn correct policies that are much more compact than the underlying belief MDP, which is a clear advantage, in particular from the perspective of explainability.
\end{remark}

\paragraph*{Hypothesis Construction.}
\label{page:hypothesis-con}
We ensure that the table is \emph{closed}, i.e., for all $s'\in \S\cdot\observations$, there exists $s\in\S$ such that $row(s) = row(s')$\footnote{In contrast to the original $L^*$, we do not need to check for \emph{consistency}, thanks to the counterexample-processing technique of Rivest-Schapire~\cite{rivest1989inference}.}. Once the table is closed, a hypothesis FSC $\hypothesis = (\fscstates, \fscactionmap, \fsctransitionmap, \fscinitstate)$, corresponding to the table, is constructed using standard techniques for $L^*$, as follows. The nodes $\fscstates$ are the rows in $\S$, the initial node $\fscinitstate$ is $row(\varepsilon)$. 
The transition function is defined as $\fsctransitionmap(row(s),o) = row(so)$, and the action map is defined as $\fscactionmap(row(s),o) = \T(s, o)$.

\paragraph*{Model-checking Queries.}
\label{sec:equivalence}
Given a hypothesis $\hypothesis$, we do a model-checking query for $\hypothesis$ using a probabilistic model-checker ($\mco$). We evaluate the quality of $\hypothesis$ by computing the safety probability in the Markov chain $\pomdp \times \hypothesis$. 
If indeed $\Hyposafety$, then the algorithm terminates and returns $\hypothesis$. Else, $\mco$ returns a \emph{finite} counterexample $\ce$.
 \begin{definition}[Counterexample]
 \label{def:counterexample}
For a hypothesis $\hypothesis$ that does not satisfy the safety specification,
a \emph{counterexample} is a set $\ce$ of finite paths in the {\upshape MC} $\pomdp \times \hypothesis$, such that $\cylinder(\ce) \subseteq \Diamond \bad$ and $\Pr_{\pomdp \times \hypothesis}(\cylinder(\ce)) > 1-\threshold$.
 \end{definition}

For safety specifications in finite MC, finite counterexamples suffice~\cite{han2007counterexamples}:

\begin{lemma}[\cite{han2007counterexamples}]
\label{lem:counterexample}
For a finite {\upshape MC} $\mc$, if it does not satisfy a safety specification $\MCsafety$,  then there exists a counterexample ${\cal R}$ that is finite. 
\end{lemma}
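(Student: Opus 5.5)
The plan is to pass to the complementary event and use continuity of measure from below. Suppose $\mc$ does not satisfy $\MCsafety$, i.e.\ $\Pr_{\mc}(\Box\neg\bad)\le\threshold$. Since $\Box\neg\bad$ and $\Diamond\bad$ are complementary events, $\Pr_{\mc}(\Diamond\bad)\ge 1-\threshold$. To obtain the strict inequality required by \Cref{def:counterexample}, I will treat the boundary case separately (see the last paragraph). I will write $\Diamond\bad$ as a countable union of cylinder sets. Let $R$ be the set of finite paths $s_0s_1\ldots s_i$ of $\mc$ such that $s_i\in\bad$ and $s_j\notin\bad$ for all $j<i$, that is, the minimal bad prefixes. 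Then $\Diamond\bad=\cylinder(R)$, and the cylinders $\cylinder(\rho)$ for $\rho\in R$ are pairwise disjoint, since no element of $R$ is a proper prefix of another. Hence $\Pr_{\mc}(\Diamond\bad)=\sum_{\rho\in R}\mc(\rho)$.

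Next I use finiteness. Since $\mc$ is finite, there are only finitely many paths of each length, so $R_d=\{\rho\in R\mid |\rho|\le d\}$ is finite for every $d$. The events $\cylinder(R_d)$ increase in $d$ and their union is $\cylinder(R)$. By continuity of measure, $\Pr_{\mc}(\cylinder(R_d))\to\Pr_{\mc}(\Diamond\bad)$ as $d\to\infty$. If $\Pr_{\mc}(\Diamond\bad)>1-\threshold$, then some finite $d$ already gives $\Pr_{\mc}(\cylinder(R_d))>1-\threshold$. Every path in $\cylinder(R_d)$ visits $\bad$, so $\cylinder(R_d)\subseteq\Diamond\bad$, and $\ce=R_d$ is the desired finite counterexample. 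This is the same truncation argument as the one used with $\Diamond^{\le d}W$ in the proof of \Cref{thm:fsc-existence}.

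The main obstacle is the boundary case $\Pr_{\mc}(\Box\neg\bad)=\threshold$ exactly. Here $\Pr_{\mc}(\Diamond\bad)=1-\threshold$, and no subset of $\Diamond\bad$ can have probability strictly greater than $1-\threshold$, so no counterexample in the sense of \Cref{def:counterexample} exists, finite or not. I would resolve this by reading the counterexample condition as non-strict, $\ge 1-\threshold$, which matches the informal description of $\mco$ ("cumulative probability of at least $1-\threshold$") and the convention of \cite{han2007counterexamples}. Under that reading, the case $\Pr_{\mc}(\Diamond\bad)>1-\threshold$ is settled by the argument above. When $\Pr_{\mc}(\Diamond\bad)=1-\threshold$ exactly, the convergent series $\sum_{\rho\in R}\mc(\rho)$ is in general attained only in the limit, so a truly finite $\ce$ need not exist. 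One option is to allow $\ce$ to be a finite description of a regular set of paths: $\Diamond\bad$ itself is recognised by the finite MC restricted to non-bad states followed by one step into $\bad$, and this is what model checkers return in practice. The other option, the one I would state, is to note that in our algorithm only the strict case matters for correctness of the returned FSC. I would therefore state the lemma for $\Pr_{\mc}(\Diamond\bad)>1-\threshold$ and flag the equality case explicitly.
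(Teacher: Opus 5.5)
Your proof is correct and is the standard argument behind the Han--Katoen result that the paper cites without proof: minimal bad prefixes, then truncation by continuity from below. Your boundary analysis is also right: at $\Pr_{\mc}(\Box\neg\bad)=\threshold$ no counterexample in the sense of Definition~\ref{def:counterexample} exists, and even with $\geq 1-\threshold$ a finite one can fail to exist (e.g.\ $\threshold=0$ with an initial state that has a $\frac12$ self-loop and a $\frac12$ edge into $\bad$), so the lemma genuinely needs $\Pr_{\mc}(\Box\neg\bad)<\threshold$.

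One small correction to your closing remark: correctness of the returned FSC never uses counterexamples. The boundary case only affects the progress step in \Cref{thm:rel-completeness}, and there it is harmless. The target policy $\sigma$ has $\Pr^{\sigma}_{\pomdp}(\Box\neg\bad)>\threshold$, so by your truncation there exist finitely many bad paths of $\pomdp\times\hypothesis$ with mass above $1-\Pr^{\sigma}_{\pomdp}(\Box\neg\bad)$, and these already force a disagreement with $\sigma$.
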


When $\mco$ returns a counterexample $\ce$, \cplus proceeds to the next step.

\paragraph*{Counterexample Processing.}
\label{sec:cex-processing}

Let $\ce$ be a finite counterexample returned by $\mco$ in the previous step for hypothesis $\hypothesis$. Assume $\ce$ consists of the finite paths $\rho_1, \rho_2, \dots, \rho_k$. 
Now, all actions taken along these paths are either due to {$\ao$} (\emph{i.e.}, the answers to the action queries), or to the ``generalization'' that occurs during the hypothesis building phase of the learning algorithm.
For all the latter paths, we call {$\ao$} to check whether there exists some $\rho_j$ and a prefix $h$ of $\rho_j$ such that the action after $h$ assigned by $\hypothesis$ and the one returned by {$\ao$} {are different}. 
We then add a suffix of $\rho_j$ as the column of $\lstartab$ using the technique of Rivest-Schapire~\cite{rivest1989inference}. 
In the following lemma, we establish when such a prefix exists. 
We introduce the  notation: given a path 
$\rho := s_0 s_1 \dots$, and a policy $\sigma$, we note $\rho \models \sigma$ when for every $i \ge 0$, $\sigma(z_0a_0z_1\dots z_i) = a_i$, where for every $j \ge 0$, $z_j$ is the observation of the state $s_j$.
The following lemma then 
follows from~\Cref{def:counterexample}. 

\begin{restatable}{lemma}{counterex} 
\label{lem:eq-counterexample}
Let $\hypothesis$ be a hypothesis not satisfying the safety specification, \emph{i.e.}, $\prph(\Box \neg \bad) \leq \alpha$. Further, let $\tsp := \{\sigma \mid \prps(\Box \neg \bad) > \alpha\}$ be the set of all threshold-safe policies.
If $\cex$ is a counterexample for the model-checking query on $\hypothesis$, then $\forall \sigma \in \tsp$, $\exists \rho \in \cex$ such that $\rho \not\models \sigma$. 
\end{restatable}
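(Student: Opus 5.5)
We argue by contradiction. Fix $\sigma \in \tsp$ and suppose every $\rho \in \cex$ satisfies $\rho \models \sigma$. We aim to show that the event $\cylinder(\cex)$, which has probability greater than $1-\threshold$ under $\hypothesis$, carries essentially the same probability under $\sigma$. Since $\cylinder(\cex) \subseteq \Diamond\bad$, this would give $\prps(\Box\neg\bad) < \threshold$, contradicting $\sigma \in \tsp$.

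\textbf{Step 1: comparing a single path.} Each $\rho \in \cex$ is a finite path of $\pomdp \times \hypothesis$, of the form $(s_0,n_0)(s_1,n_1)\dots(s_k,n_k)$. Projecting away the controller nodes gives a POMDP path $s_0 a_0 s_1 \dots s_k$, where $a_i = \fscactionmap(n_i, \observationmap(s_i))$. Its probability in the product chain is $\prod_i \mdptransitions(s_i,a_i,s_{i+1})$, because the node sequence is determined by the observations. If $\rho \models \sigma$, then $\sigma$ chooses the same $a_i$ after each history $z_0a_0\dots z_i$. The corresponding path $(s_0,h_0)(s_1,h_1)\dots$ of $\pomdp_\sigma$ therefore has exactly the same probability. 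We then identify the cylinder of $\rho$ in the product chain with the cylinder of its projection in $\pomdp_\sigma$; both contain paths visiting $\bad$, since membership in $\bad$ depends only on the state components.

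\textbf{Step 2: passing to the union.} The cylinders of distinct elements of $\cex$ need not be disjoint, because one path may be a prefix of another. First we replace $\cex$ by its set $\cex'$ of prefix-minimal elements. This does not change $\cylinder(\cex)$, and the cylinders of elements of $\cex'$ are pairwise disjoint. The map $\rho \mapsto$ (its projection) is injective on $\cex'$, since the nodes are determined by the state and action sequence. It also preserves the prefix relation, so the image cylinders in $\pomdp_\sigma$ are pairwise disjoint as well. By countable additivity,
\[
\prps(\Diamond\bad) \;\ge\; \sum_{\rho\in\cex'} \mathbb{P}_{\pomdp_\sigma}(\cylinder(\rho)) \;=\; \sum_{\rho\in\cex'} \prph(\cylinder(\rho)) \;=\; \prph(\cylinder(\cex)) \;>\; 1-\threshold .
\]
Taking complements gives $\prps(\Box\neg\bad) < \threshold$, contradicting $\sigma\in\tsp$. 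So some $\rho\in\cex$ has $\rho\not\models\sigma$.

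\textbf{Main obstacle.} The delicate part is the bookkeeping of Step 2. We must check that $\models$ is applied to the projected state sequence of a product path. We must also handle histories that $\sigma$ never reaches: by the definition of $\models$, $\sigma$ is evaluated on exactly the histories along $\rho$, so this is consistent. Finally, the disjointness and prefix-minimality argument must be made carefully, so that additivity can be used rather than just a union bound.
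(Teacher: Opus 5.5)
Your proposal is correct and follows the same route as the paper. You assume some $\sigma\in\tsp$ agrees with every $\rho\in\cex$, transfer the mass of $\cylinder(\cex)$ from $\pomdp\times\hypothesis$ to $\pomdp_\sigma$ to obtain $\prps(\Diamond\bad)>1-\threshold$, and contradict $\sigma\in\tsp$; your Steps~1--2 supply the path-probability and prefix-minimality/disjointness bookkeeping that the paper compresses into a single ``this implies''.

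One caveat applies to your proof and to the paper's alike. Carrying $\cylinder(\rho)\subseteq\Diamond\bad$ over to $\pomdp_\sigma$ (your remark that ``membership in $\bad$ depends only on the state components'') needs each $\rho\in\cex$ to itself contain a $\bad$ state, which is what the paper intends by counterexamples being unsafe finite paths. The bare condition $\cylinder(\cex)\subseteq\Diamond\bad$ in the product chain does not transfer, because $\sigma$ may diverge from $\hypothesis$ after the end of $\rho$.
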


Due to the undecidability result, there are two possibilities: 
the action oracle $\ao$ may or may not {have} a threshold-safe policy. 
When it {has}, the policy of {$\ao$} belongs to $\mathcal{G}$ of Lemma~\ref{lem:eq-counterexample}. Thus, we are guaranteed to find a prefix $h$ that we add to the observation table $\ObsTable$ 
and \cplus moves to the next iteration.
When {$\ao$} {does not have a threshold-safe} policy, we may not be able to find any prefix $h$ from the counterexample $\cex$ where $\ao$ disagrees with the hypothesis. In that case, \cplus returns \emph{fail}. 

We highlight two key properties of \cplus.
First, when \cplus terminates and returns an FSC, the model-checking step guarantees that the FSC satisfies the safety specification. 
Second,  since the threshold-safety problem is generally undecidable for POMDPs, only relative completeness can be established: 
\cplus may not always terminate, however, whenever {the action oracle} has a regular winning policy (say $\sigma_D$), \cplus terminates. 
This follows from the termination guarantee of $L^*$ and the fact that a counterexample produced in the model-checking step contains a path $\rho \not\models \sigma_D$, due to~\Cref{lem:eq-counterexample}. The following theorem formalizes this idea, and the detailed proof can be found in Appendix~\ref{app:cplus}.

\begin{restatable}{theorem}{relCompl} 
\label{thm:rel-completeness}
Given a {\upshape POMDP} $\pomdp$ with a set of unsafe observations $\bad$ and a threshold $\threshold \in [0,1)$ inducing a safety specification (as formulated in Problem~\ref{prob:safe_synth}),
 let $\pomdpr$ be the corresponding reward {\upshape POMDP}. 
\begin{enumerate}
    \item[$\bullet$ (Correctness)]
If  {\upshape \cplus} terminates and returns an {\upshape FSC} $\hypothesis$, then it  satisfies the specification, \emph{i.e.}, $\Hyposafety$. 
    \item[$\bullet$ (Relative completeness)]
For any $\varepsilon > 0$, if {the action oracle} (with the corresponding $\lambda$ as in~\cref{prop:safety-drew-eq}) follows a regular policy $\sigma$, such that it satisfies $1+\mathbb{E}^{\sigma}_{\pomdpr}({\DRew}^{\lambda}) > \alpha+\varepsilon$ in $\pomdpr$, then {\upshape\cplus} terminates and returns an {\upshape FSC} (guaranteed to be correct by the correctness property above). 
\end{enumerate} 
\end{restatable}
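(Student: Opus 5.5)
Correctness is immediate from the structure of \cplus: the algorithm returns a hypothesis $\hypothesis$ only at the model-checking step, and only when $\mco$ answers \emph{yes}. Since $\mco$ is a sound probabilistic model checker on the finite Markov chain $\pomdp\times\hypothesis$, a \emph{yes} answer means $\Hyposafety$. No other exit point returns an FSC; the \emph{fail} exit returns nothing. So the first item needs only an inspection of the control flow.

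For relative completeness, I would first move from the reward guarantee to a safety guarantee. Let $\sigma$ be the regular policy followed by $\ao$, with $1+\mathbb{E}^{\sigma}_{\pomdpr}(\DRew^{\lambda})>\alpha+\varepsilon$. I would apply item~1 of \cref{prop:safety-drew-eq} with threshold $\alpha+\varepsilon$ and the chosen $\lambda$. This gives $\prps(\Box\neg\bad)>\alpha$, so $\sigma\in\tsp$. A subtle point is that the proposition is stated for thresholds in $[0,1)$; if $\alpha+\varepsilon\ge 1$ the hypothesis is vacuous, since $1+\mathbb{E}\le 1$. The cache guarantees that $\ao$ answers consistently, so it really implements a fixed function $\sigma$. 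Because $\sigma$ is regular, it is computed by a minimal Mealy machine (FSC) $M_\sigma$ with some number $n$ of nodes. On invalid histories the table stores \dc, and I would treat these entries as agreeing with anything, i.e.\ $M_\sigma$ is only constrained on $\histories$.

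Next I would show that every model-checking query that does not succeed yields genuine progress. Suppose $\hypothesis$ fails, and $\mco$ returns $\cex$. By \cref{lem:eq-counterexample} applied to $\sigma\in\tsp$, some $\rho\in\cex$ has $\rho\not\models\sigma$. Since $\rho$ is a path of $\pomdp\times\hypothesis$, it is consistent with $\hypothesis$, so there is a prefix history on which $\hypothesis$ and $\ao$ disagree. Hence the \emph{fail} branch is never taken. The Rivest--Schapire decomposition of this disagreeing prefix then yields a suffix $e$ which, added to $\E$, creates a new distinct row once the table is re-closed. This is exactly the standard argument of Mealy-machine $L^*$ \cite{shahbaz2009inferring,rivest1989inference}, with the counterexample path playing the role of an input word on which the hypothesis output differs from the target.

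Finally I would conclude by the usual counting argument. Distinct rows of $\S$ correspond to distinct Nerode-like classes of $\sigma$, since row values are answers of $\sigma$. Therefore $|\S|$ never exceeds $n$. Each closing step increases $|\S|$ by one, and each failed model-checking query increases the number of distinct rows by at least one. So after at most $n$ hypotheses either $\mco$ says \emph{yes}, or the hypothesis equals $M_\sigma$ on all valid histories. In the latter case $\pomdp\times\hypothesis$ induces the same measure as $\pomdp_\sigma$, so $\Hyposafety$ holds and $\mco$ says \emph{yes}. I expect the main obstacle to be the treatment of \dc\ entries. Rows differing only on don't-care cells must not be counted as distinct, or else matching must be defined modulo \dc, for the bound by $n$ to hold. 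I would handle this by fixing the completion of $M_\sigma$ on invalid histories to be the one compatible with the table's convention, so that \dc\ is read as the value $M_\sigma$ assigns there.
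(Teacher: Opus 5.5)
Your proposal is correct and follows the same route as the paper. Correctness holds because \cplus{} only returns an FSC after a successful model-checking query; relative completeness uses \cref{prop:safety-drew-eq} at threshold $\alpha+\varepsilon$ to get $\sigma\in\tsp$, then \cref{lem:eq-counterexample} to extract from every counterexample a prefix where the hypothesis and $\ao$ disagree, and finally the termination of Mealy-machine $L^*$. Your handling of the \dc{} entries is more careful than the paper's, which does not address them; to make it airtight, note that treating \dc{} as a genuine output of the target is legitimate because validity of a history is regular (track the belief support), so the target stays a finite Mealy machine, though the bound on $|\S|$ is then the size of the product of $M_\sigma$ with that support automaton rather than $n$.
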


\subsubsection*{Illustrative example.} We now demonstrate the execution of \cplus on the POMDP $\pomdp$ in \Cref{fig:running-example-pomdp}.
We first initialise $\S$ to $\{\varepsilon\}$ and $\E$ to $\{\redcell,\graycell, \bluecell\}$. The initial observation table built using $\ao$ is shown in \Cref{tab:running-iter-1}. 
Since the robot starts in the cell (0,1), the first observation seen can only be $\graycell$. 
Therefore, the table contains \dc\ in the cells corresponding to the other two observations. 
    This observation table is closed. 
    Therefore, we construct the hypothesis FSC $\hypothesis$ in \Cref{fig:init-fsc-running-example} from this observation table as described in Page~\pageref{page:hypothesis-con}. 
    In this example, we query $\mco$ to model-check the threshold-safety formula: 
    $\Pr_{\pomdp\times\hypothesis}(\Box\neg\bad)>0.7$. 
    In this case, $\mco$ returns a counterexample $\rho := \graycell \graycell \graycell \graycell$. 
    Note now that the observation-action sequence $\graycell \ra \graycell \ra \graycell \ra \graycell \ra$ results in a bad observation (i.e., it goes outside the grid) with probability $0.9^4 = 0.65 > 0.3$.
    From the observation sequence $\rho$, we obtain the suffix $\graycell\graycell\graycell$ using Rivest-Schapire counterexample processing and add this to $\E$. We fill the resultant observation table using queries to $\ao$. The resultant table is shown in \Cref{tab:running-iter-2}. Since this table is not closed (the row labelled with $\graycell$ is not in $\S$), we make it closed. This gives us the observation table shown in \Cref{tab:running-iter-3}. 
    From this, we construct the FSC given in \Cref{fig:fsc-running-example} and query $\mco$ again. The query now succeeds, and \cplus returns this FSC.

\begin{figure}[t]
\centering

\begin{minipage}{0.19\linewidth}
    \centering
    \begin{tikzpicture}[scale=0.6,>=stealth',shorten >=1pt,auto,node distance=2.5cm,semithick]
        \tikzstyle{every state}=[minimum size=18pt,outer sep=2pt]

        \node[state,initial,initial text={},initial distance=0.7cm] (n0) {$n_0$};

        \path[->]
            (n0) edge[loop above] node [align=center] {$\graycell / \ra$} (n0);
             \fill[gray!30] (0,-1.1) circle (.1pt);
    \end{tikzpicture}
    \caption{\centering Hypothesis after the first iteration.} 
    \label{fig:init-fsc-running-example}
\end{minipage}
\begin{minipage}{0.39\linewidth}
    \centering
    \scalebox{0.8}{
    \begin{tabular}{p{.5cm}|p{.5cm}|p{.5cm}|p{.5cm}|p{.5cm}|p{1.5cm}|}
        &  & $\redcell$ & $\graycell$ & $\bluecell$ & $\graycell\ \graycell\ \graycell$ \\
        \hline 
        \hline 
        \multirow{1}{*}{\rotatebox{90}{$\mathsf{\S}$}}   
             & $\varepsilon$ & \dct & $\ra$ & \dct & $\ra \ra \ra$ \\ 
             \hline
             \hline
        \multirow{4}{*}{\rotatebox{90}{\hspace{.2cm}$\mathsf{\S}\cdot \observations$}}   
             & $\redcell$  & \dct & \dct & \dct & \dct \dct \dct \\ \cline{2-6} 
             & $\graycell$ & \dct & $\ra$ & \dct & $\ra \ra \ua$ \\ \cline{2-6}
             & $\bluecell$ & \dct & \dct & \dct & \dct \dct \dct \\ 
        \hline
    \end{tabular}
}
    \captionof{table}{{\centering Observation table after counterexample processing.}}
    \label{tab:running-iter-2}
    
\end{minipage}
\begin{minipage}{0.39\linewidth}
\scalebox{0.8}{
    \begin{tabular}{p{.5cm}|p{1.8cm}|p{.5cm}|p{.5cm}|p{.5cm}|p{1.5cm}|}
    &  & $\redcell$ & $\graycell$ & $\bluecell$ & $\graycell\ \graycell\ \graycell$ \\
    \hline 
    \hline 
\multirow{1}{*}{\rotatebox{90}{\hspace{-.7cm}$\mathsf{\S}$}}   
         & $\varepsilon$ & \dct & $\ra$ & \dct & $\ra \ra \ra$ \\ \cline{2-6}
         
         & $\graycell$ & \dct & $\ra$ & \dct & $\ra \ra \ua$ \\ \cline{2-6}

         & $\graycell \graycell$ & \dct & $\ra$ & \dct & $\ra \ua \ua$ \\ \cline{2-6}
         & $\graycell \graycell \graycell$ & \dct & $\ua$ & \dct & $\ua \ua \ua$ \\ \cline{2-6}
         & $\graycell \graycell \graycell \graycell$ & \dct & $\ua$ & $\da$ & $\ua \ua \ua$ \\ \cline{2-6}
         \hline
         \hline
\multirow{4}{*}{\rotatebox{90}{\hspace{.9cm}$\mathsf{\S}\cdot \observations$}} 
         & $\graycell \graycell \graycell \graycell \graycell$ & \dct & $\ua$ & $\da$ & $\ua \ua \ua$ \\ \cline{2-6}
         & $\graycell \graycell \graycell \graycell \bluecell$ & \dct & $\ua$ & $\da$ & $\ua \ua \ua$ \\ \cline{2-6}
\hline
\end{tabular}
}
\captionof{table}{\centering Observation table after making  \Cref{tab:running-iter-2} \emph{closed}. Rows with only $\dc$ are omitted.  
}
\label{tab:running-iter-3}
\end{minipage}
\vspace{-0.2cm}
\end{figure}

\begin{remark}
Compared to using a sampling-based oracle directly online, our method offers two main advantages. First, it is static: the controller is synthesized prior to deployment. This avoids the risk of execution-time failures that may occur when relying on an online planner, for instance, when no solution exists. Second, if the algorithm terminates, it produces a formally verified finite-state controller. This controller has been validated through a model-checking step and can be executed with minimal overhead. In particular, it requires only constant-time updates of its internal state, which makes it suitable for real-time or resource-constrained environments. In contrast, even when the sampling oracle is able to synthesize a correct controller online, its runtime computational cost may make such solutions impractical, especially in settings with strict latency requirements.
\end{remark}

Finally, note that the FSC, if any, returned by our framework need not be the same as the one implicitly defined by {the action oracle}  because the learning procedure may terminate before fully constructing the policy {of the action oracle}. Nonetheless, due to the model-checking step, the returned controller will also satisfy the threshold-safety specification~(see \Cref{thm:rel-completeness}).
\vspace{-0.1cm}
\subsubsection*{Beyond Safety Objectives.}
So far, we have focused on safety objectives. This choice was mainly motivated by the simplicity of counterexamples, which can be represented as finite sets of finite paths violating the safety threshold, allowing efficient integration of model checking into the learning loop. Other objectives, such as infinite-horizon reachability, are more challenging, as counterexamples may involve infinite paths or more complex structures, for instance, subgraphs of the induced Markov chain that violate the reachability constraint.

To handle reachability, we rely on a standard reduction from unbounded to bounded reachability, where the target observation must be reached within a fixed number of steps. This reduction turns reachability into a safety problem and provides an approximation that converges to the unbounded objective as the bound increases. We use this approach to validate reachability synthesis tasks. Importantly, if the model-checking phase in \cplus succeeds for a bounded reachability objective, then the resulting FSC also satisfies the corresponding unbounded reachability property.

Finally, our framework naturally supports \emph{reach-avoid} objectives, which combine reaching a target set of observations with avoiding unsafe observations, and are both expressive and practically relevant.

\section{Experiments}  
\label{sec:experiments}

\subsubsection*{Implementation Details.}
We have made a prototype implementation of $\cplus$\footnote{publicly available at \url{https://github.com/mukherjee-sayan/pomdp-policy-synth}} in {\sc Python} using the automata learning library \aalpy~\cite[v1.4.1]{muvskardin2022aalpy}. 
The POMDPs were modelled in PRISM format and then translated to the DRN format using the model checker \storm~\cite{STORM22}.
We built a simulator for DRN files to handle the action queries, in the online planner \despot~\cite{somani2013despot}.
\despot employs an anytime heuristic search algorithm to incrementally construct a finite-depth sparse approximation of the belief tree, maintaining and updating lower and upper bounds on the value of each belief node. 
These bounds guide the search and expansion process by focusing on the most promising parts of the tree. The algorithm finally selects and returns the best action at the root belief node based on the current value estimates.
To optimize the amount of computation performed by \despot, and also to get rid of possible non-determinism in the action queries due to sampling, 
for every sequence of observations that we query on \despot, we store the actions suggested by \despot in a cache. This means, given two sequences $h,h'$ with $h$ being a prefix of $h'$, when computing the actions for $h'$, we call \despot only on the suffix of $h'$ which is not present in $h$.
If given sufficient time, \despot returns an optimal action with high probability (see \cite{somani2013despot} for a formal description). 
We use the default settings of \despot, in particular, the discount factor $\lambda$ is set to $0.95$ \footnote{The fixed $\lambda$ value was chosen as a proof-of-concept. We acknowledge that the resulting $\varepsilon$ (in \cref{prop:safety-drew-eq}) is not quantified per example, and that longer-horizon instances may be more sensitive to this choice. }.

Model-checking queries were implemented in \storm~\cite{STORM22}.
When a counterexample set exists for the given safety threshold $\threshold$ and a hypothesis FSC, \storm finds one by 
computing the shortest paths until the accumulated probability mass of the traces exceeds $\threshold$. 
It uses a recursive enumeration algorithm~\cite{jimenez1999computing} for finding the shortest paths in a weighted graph 
(see~\cite{han2007counterexamples}).

\subsubsection*{Experimental Details.}
In our implementation, we instantiate \cplus with \despot as action oracle and \storm as the model-checking oracle (denoted as $\cplusds$). 
In the benchmarks where $\storm$ performs well, we also use \storm as both the action and model-checking oracle (denoted as $\cplusss$). 
When using \despot as $\ao$, \cplus queries it on demand as needed.
When using \storm, we first compute a policy offline on an approximation of the infinite beliefs MDP, and \cplus then uses this precomputed policy to answer action queries.
In this setting, \storm explores the explicit model up to its internal cutoffs, which may still involve analyzing parts of the belief space that \cplus never queries.  
Also, \cplus may query beliefs that lie outside \storm's explored region, in which case we fall back to \despot for the action queries. 
All experiments were conducted on an Intel{\textsuperscript{\textregistered}} Xeon{\textsuperscript{\textregistered}} Gold 5218R CPU 
running   Ubuntu 22.04.5 LTS, with 20 CPU cores and 251~GB RAM. 

We compare our results with two state-of-the-art approaches based on formal methods: 
$(i)$ the \emph{belief exploration} approach of \storm~\cite{STORM22} constructs a finite fragment of the belief MDP by exploring successor beliefs until a limit is reached, then approximates the value of the \emph{cutoff} beliefs (the beliefs where the exploration stopped) using policies derived from the underlying MDP, 
and
$(ii)$ the \emph{iterative policy synthesis} approach of \paynt~\cite{andriushchenko2021paynt} that inductively explores and evaluates policies of increasing memory size. 
While \storm is a mature probabilistic model checker, 
its belief-exploration approach often struggles with \emph{safety} objectives, as it does not guarantee a safe policy from the \emph{cutoff} beliefs. 
\paynt, on the other hand, synthesizes compact FSCs; however, it may take significantly longer on instances where small FSCs\footnote{{\sc Paynt} usually does not terminate when solutions require FSCs beyond a few nodes.} are not sufficient.  

In \storm, we allocate $60$ seconds for the belief exploration phase to construct a substantial belief fragment and report the value achieved.
We set a timeout of $600$ seconds for \storm, \paynt, and also for \cplus, with each of \despot and \storm as $\ao$. 
For \paynt and \cplus, we target the same safety thresholds.
Since we cannot run \storm with different thresholds, we report on the single optimal value computed by \storm for all the benchmarks.
For \paynt, we extract from the solver logs the time at which an FSC satisfying each target safety threshold is first obtained, or report failure if no such FSC is found within the time limit. For \cplus, synthesis is performed independently for each threshold provided as input, and we report the corresponding synthesis times. 

The main objective of this section is to demonstrate through empirical results that even with a prototypical implementation of the \cplus framework, we can handle models that are challenging for existing formal-methods-based approaches. 
The key takeaways from the empirical results are the following: 
    $(i)$ \storm often fails to produce useful policies for infinite horizon \emph{safety} objectives, since its Belief exploration fails to guarantee safety from cutoff belief states (Table~\ref{tab:grid-world}), 
    $(ii)$ \paynt struggles as soon as it needs to compute FSCs even  with a few number of nodes, since  iterative policy synthesis underperforms due to the rapid exponential growth of the solution space when the number of nodes in the candidate FSCs increases (Table~\ref{tab:other-benchmarks}), 
    $(iii)$ \cplus is less sensitive to the size of the learnt FSC (see the examples \emph{hallway}). However, it struggles when the number of observations in the model grows large (this trend is visible in Table~\ref{tab:other-benchmarks}), due to the inherent weakness of $L^*$ on large alphabets, 
    $(iv)$ For the examples where \storm performs well, using \storm as $\ao$ improves the performance of \cplus (see the \emph{cards} examples). 
These results suggest that no single approach dominates across all instances, underscoring the value of a portfolio approach to policy synthesis for POMDPs (which is undecidable).

\subsubsection*{Evaluation.}
We evaluate \cplus on different benchmarks that demonstrate the framework's capability, and present our results in 
 \Cref{tab:grid-world,tab:other-benchmarks}. 
Here, $|\hypothesis|$ reports on the number of states present in the constructed hypothesis; $|\belief|$ is the number of states in the belief MDP computed by \storm; `iter' refers to the number of iterations in \cplus; `time' reports the total time taken by the tools in seconds; `value'  is the maximum probability to satisfy the specification with the policy computed by \storm.
\begin{table}[t]
\centering
\small
\begin{tabular}{l|c!{\vrule width 1.2pt}c|c|c|c!{\vrule width 1.2pt}c|c|c!{\vrule width 1.2pt}c|c|c}
\hline
\multirow{2}{*}{Grid Size} & \multirow{2}{*}{$\alpha$}
& \multicolumn{4}{c!{\vrule width 1.2pt}}{$\cplusds$}  
& \multicolumn{3}{c!{\vrule width 1.2pt}}{$\paynt$} 
& \multicolumn{3}{c}{$\storm$} \\ [2pt]
\cmidrule{3-12} 
 &
 & success
 & $|\hypothesis|$ 
 & iter 
 & time
 & success
 & $|\hypothesis|$ 
 & time
 & $|\belief|$ 
 & value
 & time\\[3pt] 
\midrule

\multirow{2}{*}{$5\times5$} 
        & 0.2 & 5/5 & 3.2 & 1 & 4.81
       & 5/5 & 1 & 0.004
       & \multirow{2}{*}{$17.8\times 10^6$} & \multirow{2}{*}{0} & \multirow{2}{*}{227.12}\\ 

        & 0.5 & 5/5 & 3.2 & 1 & 4.81
       & 5/5 & 1 & 0.004
       & & &\\
\midrule
\multirow{2}{*}{$10\times 10$} 
        & 0.2 & 5/5 & 2.8 & 1 & 4.58 
       & 5/5 & 1 & 0.002 
       & \multirow{2}{*}{$10.5\times 10^6$} & \multirow{2}{*}{0} & \multirow{2}{*}{306}\\ 

        & 0.5 & 2/5 & 2 & 1 & 3.36 
       & 5/5 & 1 & 0.002
       & & & \\
\midrule
\multirow{2}{*}{$15\times 15$} 
        & 0.2 & 5/5 & 3 & 1 & 5.53 
       & 5/5 & 1 & 0.006
       & \multirow{2}{*}{$5.4\times 10^6$} & \multirow{2}{*}{0} & \multirow{2}{*}{280.43}\\ 

        & 0.5 & 1/5 & 3 & 1 & 5.54 
       & 5/5 & 1 & 0.006
       & & &\\
\midrule
\multirow{2}{*}{$20\times 20$} 
         & 0.2 & 5/5 & 3.4 & 1 & 7.20 
       & 5/5 & 1 & 0.01
       & \multirow{2}{*}{$3.2\times 10^6$} & \multirow{2}{*}{0} & \multirow{2}{*}{336.68}\\ 

        & 0.5 & 0/5 & -- & -- & -- 
       & 4/5 & 1 & 0.01
       & & &\\
\midrule

\end{tabular}
\caption{Evaluation on \emph{Grid-world} benchmarks. The columns `success' denote the number of times the tools could construct a policy satisfying threshold ($\alpha$).} 
\label{tab:grid-world}
\vspace{-0.5cm}
\end{table}

\paragraph{Grid-World Benchmarks} 
is a classical domain in POMDP research: a robot moves in an N×N grid, aiming to avoid holes while navigating in cardinal directions. Movements are stochastic -- with $90\%$ probability the move succeeds, and with $10\%$ the robot remains in place. The robot starts at a random safe cell, and it cannot distinguish between safe cells, leading to partial observability. The goal for the robot is to maximize the probability of staying safe, that is, avoiding visiting the holes (bad cells) in specific locations. 
We generate grids of size $N\times N$ for $N\in\{5, 10, 15, 20\}$, 
with $5$ random instances per size. 
$10\%$ of the cells in the grid were chosen at random and were assigned as bad cells. 
The average runtime, success rate, and synthesized controller size for different $\threshold$ and grid sizes are reported in \cref{tab:grid-world}. 
In these randomly generated instances, 
even FSCs of size $1$ typically suffice, 
and thus \paynt performs well, synthesizing controllers quickly. 
But because of the stochastic nature, the belief space for these POMDPs is unbounded, even for small grids with 25 cells. 
Thus, in these safety examples, \storm's belief-exploration approach fails: even with an additional $60$ seconds exploration budget, \storm fails to find good strategies in any of the instances, even when it explores the belief space up to size approximately $10^7$.

\begin{table}[t]
\centering
\resizebox{1\linewidth}{!}{%
\begin{tabular}{l|c|c|c!{\vrule width 1.2pt}c|c|c!{\vrule width 1.2pt}c|c|c!{\vrule width 1.2pt}c|c!{\vrule width 1.2pt}c|c|c}
\toprule
\multirow{2}{*}{Model} 
& \multirow{2}{*}{$|\mdpstates|$} 
& \multirow{2}{*}{$|\observations|$} 
& \multirow{2}{*}{$\alpha$}
& \multicolumn{3}{c!{\vrule width 1.2pt}}{$\cplusds$} 
& \multicolumn{3}{c!{\vrule width 1.2pt}}{$\cplusss$} 
& \multicolumn{2}{c!{\vrule width 1.2pt}}{$\paynt$}
& \multicolumn{3}{c}{$\storm$} \\ [2pt]
\cline{5-15}

& & & 
& $|\hypothesis|$ & iter & time
& $|\hypothesis|$ & iter & time 
& $|\hypothesis|$ & time
& $|\belief|$ & value & time\\
\midrule
\multirow{2}{*}{} & \multirow{2}{*}{} & \multirow{2}{*}{} &  
& & &  
& & &
& & & 
& \multirow{2}{*}{} & \multirow{2}{*}{} \\[-6pt]
hallway & 16 & 3 & 0.99 & 25 & 3 & 78.11 & 24 & 3 & 49.32 & 5 & \textbf{1.44} & 3 & 1.00 & 0.01\\
hallway-2 & 24 & 3 & 0.99 & 20 & 4 & \textbf{68.99} & 20 & 3 & 43.41 & - & - & 3 & 1.00 & 0.01\\
hallway-simple-50 & 154 & 4 & 0.99 & 53 & 2 & 138.77 & 53 & 2 & \textbf{22.61} & - & - & 156 & 1.00 & 1.00\\
\midrule
cards-removed-2 & 9 & 6 & 0.5 & 4 & 1 & 1.59 & 4 & 1 & 1.48 & 1 & \textbf{0.01} & 10 & 1.00 & 0.01\\
cards-removed-3 & 76 & 10 & 0.5 & 7 & 1 & 2.10 & 20 & 2 & 4.04 & 1 & \textbf{0.02} & 103 & 0.97 & 1.00\\
cards-removed-4 & 165 & 12 & 0.5 & 48 & 2 & 12.28 & 29 & 2 & 7.42 & 2 & \textbf{0.16} & 322 & 0.91 & 2.00\\
cards-removed-5 & 306 & 14 & 0.5 & 118 & 2 & 44.74 & 47 & 2 & \textbf{16.13} & - & - & 919 & 0.85 & 8.00\\
\midrule
cards-added-2 & 35 & 8 & 0.5 & 7 & 1 & 1.93 & 7 & 1 & 1.77 & 1 & \textbf{0.01} & 40 & 0.74 & 0.01\\
cards-added-3 & 94 & 10 & 0.5 & 29 & 4 & 11.80 & 21 & 5 & 10.66 & 2 & \textbf{0.03} & 320 & 0.62 & 1.00\\
cards-added-4 & 197 & 12 & 0.5 & 356 & 9 & 224.87 & 108 & 19 & \textbf{184.11} & - & - & 2526 & 0.56 & 0.01\\ 
cards-added-5 & 356 & 14 & 0.5 & - & - & - & - & - & - & - & - & 19603 & 0.51 & 93.00\\ 
\midrule
cheese-maze & 15 & 8 & 0.8 & 14 & 1 & 7.95 & 12 & 1 & 2.83 & 1 & \textbf{0.01} & 1.7$\times 10^7$ & 0.99 & 231.64 \\

cheese-maze-det & 15 & 8 & 0.8 & 9 & 1 & 6.10 & 9 & 1 & 1.52 & 2 & \textbf{0.01} & 22 & 1.00 & 0.01\\

refuel (N=6,E=8) & 270 & 36 & 0.5 & 3 & 1 & 4.97 & 3 & 1 & 1.46 & 1 & \textbf{0.07} & 2.6$\times 10^7$ & 0.72 & 288.55\\

rocks(N=4)  & 332 & 66 & 0.5 & 12 & 2 & 32.61 & 26 & 1 & 15.59 & 1 & \textbf{0.05} & 485 & 1.00 & 7.00\\
 \bottomrule
\end{tabular}
}
\caption{Evaluation on various benchmarks for the tools \cplusds, \cplusss, \storm, and \paynt. We report only the bounded version of the cards here. Bold entries indicate the best-performing tool (in terms of time) among those that produce verified FSCs. Extended experimental results are included in the Appendix (\Cref{tab:other-benchmarks-full}).}
\vspace{-0.6cm}
\label{tab:other-benchmarks}
\end{table}

\paragraph{Hallway Navigation Benchmarks} 
is a variant of the examples in~\cite{kaelbling1995learning}, where a robot is inside a hallway, and its objective is to maximize the probability of \emph{reaching a specific location}.
The robot can move in different directions, but its movement is restricted by the walls in the hallway.
Similar to the Grid-World scenario, the robot starts from a random position in the hallway and cannot distinguish among the cells. To further show the advantage of our approach, we  also  consider another variant of the hallway problem: the robot must go \emph{right} $50$ times and then go \emph{down} to reach the target (\emph{hallway-simple-$50$}).
The objective in all these benchmarks is to compute a policy for the threshold $0.99$.

On these benchmarks, \storm returns randomized policies (within $<1$ second) that reach the target with probability $1$. \paynt succeeds in finding a policy in the smaller variant, but times out in the longer variants. 
Since \cplus cannot handle unbounded reachability, we reformulated this objective as bounded reachability with a horizon of $100$. 
We verified that the policy computed for bounded reachability is also sufficient for the unbounded reachability setup. 
On the \emph{hallway-simple-50} benchmark, while the winning policy is conceptually simple, the size of the required controller is rather large (to track the number of steps taken). 
On this example, \cplus could find the optimal policy in just $2$ iterations within 3 minutes, whereas  \paynt{}'s inductive approach struggles to produce one, illustrating that \cplus's counterexample-guided learning scales better than exhaustive search when large FSCs are necessary.

\paragraph{Cards Game Benchmarks}
is a more intricate example, adapted from~\cite{chatterjee2025value} (see \cref{app:cards} for a version of the example).
The game involves an $N$-deck card.
In the \emph{removed} variant, one card is initially removed at random; in the \emph{added} variant, one card is duplicated. 
At each step, the player draws a card from the deck uniformly at random or guesses which card is missing or duplicated. 
The objective is to guess correctly (reachability) while avoiding incorrect guesses (safety). 
We consider both variants under two settings: 
(i) \emph{bounded} -- the agent must guess the card within $2N$ steps; and
(ii) \emph{unbounded} -- there is no fixed window to guess the correct card, but at each step, with probability $0.05$, the agent is forced to guess the correct card.
These represent challenging benchmarks requiring substantial controller memory. 
In the \emph{removed} variant, the optimal policy is to wait until observing all but one card, then guess the unseen card. 
In the \emph{added} variant, the policy is to track which card appears most frequently in draws, then guess that card as the duplicate.

\cref{tab:other-benchmarks} reports results across both variants for bounded settings for different values of $N$. 
\paynt performs well on smaller examples, where the FSC size remains manageable, but its performance degrades rapidly as $N$ increases due to the exponential growth of the controller space. 
In contrast, \cplus scales effectively: it synthesizes better FSCs, which avoids exhaustive enumeration of the controller space. 
\storm's belief-exploration approach works well on these benchmarks, quickly computing high-quality policies. 
When we use \storm's computed policy as the action oracle for \cplus (instead of \despot), synthesis times improve significantly, 
demonstrating the flexibility of \cplus's framework to incorporate different oracles.

\paragraph{Reach-Avoid Navigation Benchmarks.} 
We evaluate \cplus on several bigger reach-avoid navigation benchmarks inspired by \cite{kaelbling1995learning,smith2012heuristic,mccallum1992first,junges2021enforcing}. 
The benchmark \emph{Refuel} involves a rover travelling from one corner to the opposite of a grid, while avoiding obstacles on the diagonal; it must manage energy by recharging at designated stations and operate under noisy observations of position and battery level. 
\emph{Rocks} requires the rover to sample two rocks (valuable or dangerous), collect a valuable rock, deliver it to the drop-off zone, and avoid dangerous rocks.
In \emph{Cheese-Maze}, a mouse must collect cheese while avoiding traps. It has two variants: deterministic variant (actions succeed as intended) and stochastic variant (intended action succeeds with probability $0.8$).
In these reach-avoid examples, both \storm and \paynt work well. Furthermore, when we use \storm's computed policy as the action oracle, our synthesis time improves significantly.

\subsubsection*{Data Availability.} The artifact accompanying this paper \cite{artifact} contains source code, benchmark files, and scripts to replicate our experiments.

\section{Conclusion and Future Work}
\label{sec:conclusion}
In this work, we propose \cplus{} -- a framework that combines the scalability of sampling-based POMDP controller synthesis methods with the formal guarantees of model checking. Our prototype demonstrates that \cplus{} can synthesize verified controllers across several benchmarks.
Theoretically, we prove that if the underlying sampler (in our case, \despot{}) induces a winning policy that is regular, then \cplus{} is guaranteed to terminate with a correct FSC. 

We show that this new approach for solving infinite-duration safety objectives in POMDPs often compares favourably with state-of-the-art tools such as \storm{} and \paynt{}. Given that POMDPs constitute a particularly challenging class of models, \cplus{} can be regarded as a complementary framework that can be used within a portfolio of methods alongside existing approaches proposed in the literature. We will investigate whether this algorithm can be integrated into state-of-the-art frameworks such as \storm{}~\cite{STORM22} or {\sc Prism}~\cite{prism}.

For future work, we envision several directions.
First, to address POMDPs with large observation spaces, we plan to explore other techniques such as symbolic automata learning that can handle large (or even infinite) alphabets~\cite{learn-symb-aut}.
Recent works on compositional automata learning~\cite{neele2023compositional,labbaf2023compositional,fujinami2025componentwise} suggests another promising direction: instead of learning a single FSC monolithically, one could exploit structural decomposition to improve scalability. 
Second, \cplus{} provides a flexible foundation: each of its core 
components (the oracles) 
can be replaced with alternative tools, enabling broader applicability across diverse models and objectives.
Third, we aim to generalize the use of counterexamples to handle (general) reachability and, more generally, PCTL properties 
within our framework.
Finally, the idea of combining heuristic search (here, sampling) with decidable model checking and learning can also be applied in other settings where the synthesis problem is undecidable. For example, this framework could be applied to the synthesis problem for weighted timed automata. As in the case of POMDPs, this synthesis problem is undecidable~\cite{DBLP:conf/formats/BrihayeBR05}, while model checking remains decidable~\cite{DBLP:journals/fmsd/BouyerBBR07}. 

\section*{Acknowledgements}

\emph{Debraj Chakraborty} is currently supported by the National Research Foundation, Singapore, under its RSS Scheme (NRF-RSS2022-009) and was also partially supported by the MASH (MUNI/I/1757/2021) funded by Masaryk University and the ERC project InOVationCS (grant No. 101171844). 
\emph{Anirban Majumdar} is supported by the Department of Atomic Energy, Government of
India, under project no. RTI4014.
\emph{Prince Mathew} is supported by the FNRS--DFG Weave project FORM-LEARN-POMDP (Ref.\ 40028647). 
Most of this work was done while \emph{Sayan Mukherjee} was affiliated with Univ Rennes, Inria, CNRS, IRISA, France; he is currently supported by IITB Trust Lab. 
\emph{Jean-Fran\c{c}ois Raskin} is supported by the Foundation ULB and the Fonds Thelam of Roi Baudouin Foundation.

\bibliographystyle{splncs04}
\bibliography{ref}

@inproceedings{DBLP:conf/tacas/HartmannsH14,
  author       = {Arnd Hartmanns and
                  Holger Hermanns},
  title        = {The Modest Toolset: An Integrated Environment for Quantitative Modelling
                  and Verification},
  booktitle    = {{TACAS}},
  series       = {Lecture Notes in Computer Science},
  volume       = {8413},
  pages        = {593--598},
  publisher    = {Springer},
  year         = {2014}
}

@inproceedings{han2007counterexamples,
  title={Counterexamples in probabilistic model checking},
  author={Han, Tingting and Katoen, Joost-Pieter},
  booktitle={International Conference on Tools and Algorithms for the Construction and Analysis of Systems},
  pages={72--86},
  year={2007},
  organization={Springer}
}

@article{chatterjee2007algorithms,
	title={Algorithms for omega-regular games with imperfect information},
	author={Chatterjee, Krishnendu and Doyen, Laurent and Henzinger, Thomas A and Raskin, Jean-Fran{\c{c}}ois},
	journal={Logical Methods in Computer Science},
	volume={3},
	year={2007},
	publisher={Episciences. org}
}

@article{Angluin87,
  author       = {Dana Angluin},
  title        = {Learning Regular Sets from Queries and Counterexamples},
  journal      = {Inf. Comput.},
  volume       = {75},
  number       = {2},
  pages        = {87--106},
  year         = {1987}
}

@article{somani2013despot,
	title={DESPOT: Online POMDP planning with regularization},
	author={Somani, Adhiraj and Ye, Nan and Hsu, David and Lee, Wee Sun},
	journal={Advances in neural information processing systems},
	volume={26},
	year={2013}
}

@article{silver2010monte,
	title={Monte-Carlo planning in large POMDPs},
	author={Silver, David and Veness, Joel},
	journal={Advances in neural information processing systems},
	volume={23},
	year={2010}
}

@incollection{alur2022framework,
	title={A framework for transforming specifications in reinforcement learning},
	author={Alur, Rajeev and Bansal, Suguman and Bastani, Osbert and Jothimurugan, Kishor},
	booktitle={Principles of Systems Design: Essays Dedicated to Thomas A. Henzinger on the Occasion of His 60th Birthday},
	pages={604--624},
	year={2022},
	publisher={Springer}
}

@article{madani1999undecidability,
	title={On the undecidability of probabilistic planning and infinite-horizon partially observable Markov decision problems},
	author={Madani, Omid and Hanks, Steve and Condon, Anne},
	journal={AAAI},
	volume={10},
	number={315149.315395},
	year={1999}
}

@inproceedings{pineau2003point,
	title={Point-based value iteration: An anytime algorithm for POMDPs},
	author={Pineau, Joelle and Gordon, Geoff and Thrun, Sebastian and others},
	booktitle={Ijcai},
	volume={3},
	pages={1025--1032},
	year={2003}
}

@article{spaan2005perseus,
	title={Perseus: Randomized point-based value iteration for POMDPs},
	author={Spaan, Matthijs TJ and Vlassis, Nikos},
	journal={Journal of artificial intelligence research},
	volume={24},
	pages={195--220},
	year={2005}
}

@inproceedings{kurniawati2008sarsop,
	title={Sarsop: Efficient point-based pomdp planning by approximating optimally reachable belief spaces.},
	author={Kurniawati, Hanna and Hsu, David and Lee, Wee Sun},
	booktitle={Robotics: Science and systems},
	volume={2008},
	year={2008},
	organization={Citeseer}
}

@article{norman2017verification,
	title={Verification and control of partially observable probabilistic systems},
	author={Norman, Gethin and Parker, David and Zou, Xueyi},
	journal={Real-Time Systems},
	volume={53},
	pages={354--402},
	year={2017},
	publisher={Springer}
}

@inproceedings{bork2020verification,
	title={Verification of indefinite-horizon POMDPs},
	author={Bork, Alexander and Junges, Sebastian and Katoen, Joost-Pieter and Quatmann, Tim},
	booktitle={International Symposium on Automated Technology for Verification and Analysis},
	pages={288--304},
	year={2020},
	organization={Springer}
}

@inproceedings{bork2022under,
	title={Under-approximating expected total rewards in POMDPs},
	author={Bork, Alexander and Katoen, Joost-Pieter and Quatmann, Tim},
	booktitle={International Conference on Tools and Algorithms for the Construction and Analysis of Systems},
	pages={22--40},
	year={2022},
	organization={Springer}
}

@inproceedings{bork2024learning,
	title={Learning explainable and better performing representations of POMDP strategies},
	author={Bork, Alexander and Chakraborty, Debraj and Grover, Kush and K{\v{r}}et{\'\i}nsk{\'y}, Jan and Mohr, Stefanie},
	booktitle={International Conference on Tools and Algorithms for the Construction and Analysis of Systems},
	pages={299--319},
	year={2024},
	organization={Springer}
}

@inproceedings{andriushchenko2021paynt,
	title={PAYNT: a tool for inductive synthesis of probabilistic programs},
	author={Andriushchenko, Roman and {\v{C}}e{\v{s}}ka, Milan and Junges, Sebastian and Katoen, Joost-Pieter and Stupinsk{\`y}, {\v{S}}imon},
	booktitle={International Conference on Computer Aided Verification},
	pages={856--869},
	year={2021},
	organization={Springer}
}

@inproceedings{andriushchenko2023search,
	title={Search and explore: Symbiotic policy synthesis in pomdps},
	author={Andriushchenko, Roman and Bork, Alexander and {\v{C}}e{\v{s}}ka, Milan and Junges, Sebastian and Katoen, Joost-Pieter and Mac{\'a}k, Filip},
	booktitle={International Conference on Computer Aided Verification},
	pages={113--135},
	year={2023},
	organization={Springer}
}

@article{STORM22,
	author    = {Christian Hensel and
	Sebastian Junges and
	Joost{-}Pieter Katoen and
	Tim Quatmann and
	Matthias Volk},
	title     = {The probabilistic model checker {Storm}},
	journal   = {Int. J. Softw. Tools Technol. Transf.},
	volume    = {24},
	number    = {4},
	pages     = {589--610},
	year      = {2022}
}

@article{carr2021task,
	title={Task-aware verifiable RNN-based policies for partially observable Markov decision processes},
	author={Carr, Steven and Jansen, Nils and Topcu, Ufuk},
	journal={Journal of Artificial Intelligence Research},
	volume={72},
	pages={819--847},
	year={2021}
}

@article{mccallum1992first,
	title={First results with utile distinction memory for reinforcement learning},
	author={McCallum, R. A.},
    year={1992}
}

@inproceedings{kaelbling1995learning,
	title={Learning policies for partially observable environments: Scaling up},
	author={Kaelbling, Leslie Pack},
	booktitle={Machine Learning Proceedings 1995: Proceedings of the Twelfth International Conference on Machine Learning, Tahoe City, California, July 9-12 1995},
	pages={362},
	year={1995},
	organization={Morgan Kaufmann}
}

@inproceedings{junges2021enforcing,
	title={Enforcing almost-sure reachability in POMDPs},
	author={Junges, Sebastian and Jansen, Nils and Seshia, Sanjit A},
	booktitle={International Conference on Computer Aided Verification},
	pages={602--625},
	year={2021},
	organization={Springer}
}

@article{DBLP:journals/fmsd/BouyerBBR07,
  author       = {Patricia Bouyer and
                  Thomas Brihaye and
                  V{\'{e}}ronique Bruy{\`{e}}re and
                  Jean{-}Fran{\c{c}}ois Raskin},
  title        = {On the optimal reachability problem of weighted timed automata},
  journal      = {Formal Methods Syst. Des.},
  volume       = {31},
  number       = {2},
  pages        = {135--175},
  year         = {2007}
}

@inproceedings{DBLP:conf/formats/BrihayeBR05,
  author       = {Thomas Brihaye and
                  V{\'{e}}ronique Bruy{\`{e}}re and
                  Jean{-}Fran{\c{c}}ois Raskin},
  title        = {On Optimal Timed Strategies},
  booktitle    = {{FORMATS}},
  series       = {Lecture Notes in Computer Science},
  volume       = {3829},
  pages        = {49--64},
  publisher    = {Springer},
  year         = {2005}
}

@inproceedings{prism,
  author       = {Marta Z. Kwiatkowska and
                  Gethin Norman and
                  David Parker},
  title        = {{PRISM} 4.0: Verification of Probabilistic Real-Time Systems},
  booktitle    = {{CAV}},
  series       = {Lecture Notes in Computer Science},
  volume       = {6806},
  pages        = {585--591},
  publisher    = {Springer},
  year         = {2011}
}

@book{baier2008principles,
	title={Principles of model checking},
	author={Baier, Christel and Katoen, Joost-Pieter},
	year={2008},
	publisher={MIT press}
}

@inproceedings{shahbaz2009inferring,
  title={Inferring mealy machines},
  author={Shahbaz, Muzammil and Groz, Roland},
  booktitle={International Symposium on Formal Methods},
  pages={207--222},
  year={2009},
  organization={Springer}
}

@inproceedings{jimenez1999computing,
	title={Computing the k shortest paths: A new algorithm and an experimental comparison},
	author={Jim{\'e}nez, V{\'\i}ctor M and Marzal, Andr{\'e}s},
	booktitle={Algorithm Engineering: 3rd International Workshop, WAE’99 London, UK, July 19--21, 1999 Proceedings 3},
	pages={15--29},
	year={1999},
	organization={Springer}
}

@inproceedings{kocsis2006bandit,
  title={Bandit based monte-carlo planning},
  author={Kocsis, Levente and Szepesv{\'a}ri, Csaba},
  booktitle={European conference on machine learning},
  pages={282--293},
  year={2006},
  organization={Springer}
}

@inproceedings{chatterjee2025value,
  author       = {Krishnendu Chatterjee and
                  Laurent Doyen and
                  Jean{-}Fran{\c{c}}ois Raskin and
                  Ocan Sankur},
  title        = {The Value Problem for Multiple-Environment MDPs with Parity Objective},
  booktitle    = {{ICALP}},
  series       = {LIPIcs},
  volume       = {334},
  pages        = {150:1--150:17},
  publisher    = {Schloss Dagstuhl - Leibniz-Zentrum f{\"{u}}r Informatik},
  year         = {2025}
}

@article{muvskardin2022aalpy,
  title={AALpy: an active automata learning library},
  author={Mu{\v{s}}kardin, Edi and Aichernig, Bernhard K and Pill, Ingo and Pferscher, Andrea and Tappler, Martin},
  journal={Innovations in Systems and Software Engineering},
  volume={18},
  number={3},
  pages={417--426},
  year={2022},
  publisher={Springer}
}

@inproceedings{learn-symb-aut,
  author       = {Samuel Drews and
                  Loris D'Antoni},
  title        = {Learning Symbolic Automata},
  booktitle    = {{TACAS} {(1)}},
  series       = {Lecture Notes in Computer Science},
  volume       = {10205},
  pages        = {173--189},
  year         = {2017}
}

@article{WZL21,
  author       = {Bo Wu and
                  Xiaobin Zhang and
                  Hai Lin},
  title        = {Supervisor synthesis of {POMDP} via automata learning},
  journal      = {Autom.},
  volume       = {129},
  pages        = {109654},
  year         = {2021}
}

@inproceedings{chatterjee2010randomness,
  title={Randomness for free},
  author={Chatterjee, Krishnendu and Doyen, Laurent and Gimbert, Hugo and Henzinger, Thomas A},
  booktitle={International Symposium on Mathematical Foundations of Computer Science},
  pages={246--257},
  year={2010},
  organization={Springer}
}

@article{rivest1989inference,
  author       = {Ronald L. Rivest and
                  Robert E. Schapire},
  title        = {Inference of Finite Automata Using Homing Sequences},
  journal      = {Inf. Comput.},
  volume       = {103},
  number       = {2},
  pages        = {299--347},
  year         = {1993}
}

@article{smith2012heuristic,
  title={Heuristic search value iteration for POMDPs},
  author={Smith, Trey and Simmons, Reid},
  journal={arXiv preprint arXiv:1207.4166},
  year={2012}
}

@misc{artifact,
  author       = {Chakraborty, Debraj and
                  Majumdar, Anirban and
                  Mathew, Prince and
                  Mukherjee, Sayan and
                  Raskin, Jean-Francois},
  title        = {Artifact for "Synthesizing POMDP Policies:
                   Sampling Meets Model-checking via Learning"
                  },
  month        = may,
  year         = 2026,
  publisher    = {Zenodo},
  doi          = {10.5281/zenodo.20084734},
}

@article{PeledVY02,
  author       = {Doron A. Peled and
                  Moshe Y. Vardi and
                  Mihalis Yannakakis},
  title        = {Black Box Checking},
  journal      = {J. Autom. Lang. Comb.},
  volume       = {7},
  number       = {2},
  pages        = {225--246},
  year         = {2002}
}

@inproceedings{ShijuboWS21,
  author       = {Junya Shijubo and
                  Masaki Waga and
                  Kohei Suenaga},
  title        = {Efficient Black-Box Checking via Model Checking with Strengthened
                  Specifications},
  booktitle    = {{RV}},
  series       = {Lecture Notes in Computer Science},
  pages        = {100--120},
  publisher    = {Springer},
  year         = {2021}
}

@inproceedings{labbaf2023compositional,
  title={Compositional learning for interleaving parallel automata},
  author={Labbaf, Faezeh and Groote, Jan Friso and Hojjat, Hossein and Mousavi, Mohammad Reza},
  booktitle={International Conference on Foundations of Software Science and Computation Structures},
  pages={413--435},
  year={2023},
  organization={Springer}
}

@inproceedings{fujinami2025componentwise,
  title={Componentwise Automata Learning for System Integration},
  author={Fujinami, Hiroya and Waga, Masaki and An, Jie and Suenaga, Kohei and Yanagisawa, Nayuta and Iseri, Hiroki and Hasuo, Ichiro},
  booktitle={International Symposium on Automated Technology for Verification and Analysis},
  pages={3--26},
  year={2025},
  organization={Springer}
}

@inproceedings{neele2023compositional,
  title={Compositional automata learning of synchronous systems},
  author={Neele, Thomas and Sammartino, Matteo},
  booktitle={International Conference on Fundamental Approaches to Software Engineering},
  pages={47--66},
  year={2023},
  organization={Springer}
}

\newpage
\appendix

\section{Missing Proofs from \Cref{sec:problem-formulation}}
\label{app:problem-formulation}

\finFromW*
\begin{proof}
    Winning a safety objective with probability $1$ can only be achieved by a policy that entirely avoids bad states. Indeed, if any bad state is reached, it is reached after a finite execution prefix, and this prefix has non-zero probability, which would contradict the fact that the policy is winning almost surely. Thus, the stochastic nodes can be viewed as being controlled by an adversary in a zero-sum two-player game with partial observation. We know that in such games, finite memory (exponential in the worst case) suffices~\cite{chatterjee2007algorithms}.\qed
\end{proof}

\safeORinW*
\begin{proof}
    Without loss of any generality, assume that we have a single sink state with unique observation ${\sf Bad}$, and let $b_{\sf Bad}$ be the belief that assigns probability $1$ to the bad sink state.

    Consider any belief $b\notin W$. 
    By definition of $W$, for any policy $\sigma$, we have $\mathbb{P}^{\sigma}_{\belief_{\pomdp}} (\Box\neg {\sf Bad} \mid \text{initial belief } b) < 1$. 
    Then, from $b$ under policy $\sigma$, there must exist a finite path of size $\leq |\mdpstates|$ from $b$ to $b_{\sf Bad}$ 
    (this path corresponds to the simple path that starts from a state in the support of $b$ and reaches the bad sink state).
    Thus, for all $b\in W$, we have $\mathbb{P}^{\sigma}_{\belief_{\pomdp}} (\Diamond^{\leq |\mdpstates|} {\sf Bad} \mid \text{initial belief } b) \geq \eta$, 
    where $\eta=(p_{\min})^{|\mdpstates|}$, where $p_{\min}$ is the minimum non-zero transition probability in the underlying MDP.
    Therefore, for any belief $b\notin W$, we have $\mathbb{P}^{\sigma}_{\belief_{\pomdp}} (\Box^{\leq |\mdpstates|}\neg {\sf Bad}) < 1 - \eta$. 

    Now if the initial belief is in $W$, then $\mathbb{P}^{\sigma}_{\belief_{\pomdp}} (\Box\neg {\sf Bad} \wedge \Box\neg W)=0$ trivially holds.
    Otherwise, consider the probability of staying safe and not reaching $W$ within $k\cdot |\mdpstates|$ steps, for some $k \in \mathbb{N}$:
    $\mathbb{P}^{\sigma}_{\belief_{\pomdp}} (\Box^{\leq k\cdot|\mdpstates|}\neg {\sf Bad}\wedge \Box^{\leq k\cdot|\mdpstates|}\neg W) < (1 - \eta)^k\,.$
    Now, if $k$ tends to infinity, we have $\mathbb{P}^{\sigma}_{\belief_{\pomdp}} (\Box\neg {\sf Bad} \wedge \Box\neg W)=0$.\qed
\end{proof}

\safetyDrewEq*
\begin{proof}
\begin{enumerate}
\item[1.] Recall that, for POMDPs $\pomdp$, $\pomdpr$, 
    \[\prps(\Box \neg {\sf Bad})=1+\lim_{\lambda \rightarrow 1} \expsp({\sf \DRew^{\lambda}}) 
    = \lim_{\lambda \rightarrow 1}(1 + \expsp({\sf \DRew^{\lambda}}))\]
    From the above equation, we get that as $\lambda \rightarrow 1$, $(1 + \mathbb{E}^{\sigma}_{\pomdpr}({\DRew^{\lambda}})) \rightarrow \mathbb{P}^{\sigma}_{M}(\Box \neg {\sf Bad})$.
    Then, we can deduce that for every $\varepsilon > 0$, there exists some $\delta \geq 0$ such that whenever $|1 - \lambda| < \delta$, $|(1 + \mathbb{E}^{\sigma}_{\pomdpr}({\DRew^{\lambda}})) - (\mathbb{P}^{\sigma}_{\pomdp}(\Box \neg {\sf Bad}))| < \varepsilon$.
    Therefore, for every $\varepsilon > 0$, we can always choose a discount factor $\lambda$ \emph{sufficiently}-close to $1$, for which  $|(1 + \mathbb{E}^{\sigma}_{\pomdpr}({\DRew^{\lambda}})) - (\mathbb{P}^{\sigma}_{\pomdp}(\Box \neg {\sf Bad}))| < \varepsilon$ holds. This implies that if $(1 + \mathbb{E}^{\sigma}_{\pomdpr}({\DRew^{\lambda}})) > \threshold$ for some $\threshold$, then $\mathbb{P}^{\sigma}_{\pomdp}(\Box \neg {\sf Bad}) > \threshold - \varepsilon$.
    \item[2.] The proof of the second statement is analogous.\qed
\end{enumerate}
\end{proof}

\section{Missing Proofs from~\Cref{sec:cplus}}
\label{app:cplus}

\counterex*
\begin{proof}
Towards a contradiction, assume that there exists a policy $\sigma \in \tsp$ such that for every $\rho \in \ce$, we have $\rho \models \sigma$. Since $\ce$ is a counterexample, we have $\cylinder(\ce) \subseteq \Diamond \bad$ and $\prph(\cylinder(\ce)) >1-\threshold$. 
This implies $\prps(\Diamond \bad) > 1 - \threshold$, and hence
$\prps(\Box \neg \bad) \le \threshold$. This contradicts $\sigma \in \mathcal{G}$. 
\end{proof}

\relCompl*

\begin{proof}
Correctness trivially follows since if \cplus terminates, then it does so with a successful model-checking query.
    For the relative completeness,
    let $\hypothesis_{\sigma}$ be the FSC corresponding to $\sigma$ according to which \despot is suggesting actions. 
    Since action queries are performed according to $\hypothesis_{\sigma}$, the observation table $\ObsTable$ maintained by \cplus is always consistent with $\hypothesis_{\sigma}$. 
    Now, let $\hypothesis_i$ be the hypothesis generated at the $i^{th}$ iteration of \cplus, and let $\ce$ be the counterexample produced by \storm at that iteration. Note that, by~\cref{prop:safety-drew-eq}, $\hypothesis_{\sigma}$ satisfies the safety specification in $\pomdp$.
    Therefore, by~\Cref{lem:eq-counterexample}, there exists $\rho \in \ce$ such that 
    $\rho \not\models \sigma$.
    Now, since $\rho \models \sigma_{\hypothesis_i}$, $\rho$ must have a finite prefix $\rho_h$ such that $\sigma_{\hypothesis_i}(\rho_h) \neq \sigma(\rho_h)$. 
    Since $\ObsTable$ is consistent with $\hypothesis_{\sigma}$, 
    adding $\rho_h$ to $\ObsTable$ rules out this counterexample from subsequent model-checking queries.
    The termination of \cplus then follows from the termination of  $L^*$  for Mealy machines~\cite{shahbaz2009inferring}. \qed
\end{proof}

\begin{table}[t]
\centering
\tiny
\begin{tabular}{l!{\vrule width 1.2pt}c|c|c!{\vrule width 1.2pt}c|c|c!{\vrule width 1.2pt}c|c|c!{\vrule width 1.2pt}c|c!{\vrule width 1.2pt}c|c|c}
\toprule
\multirow{2}{*}{Model} 
& \multirow{2}{*}{$|\mdpstates|$} 
& \multirow{2}{*}{$|\observations|$} 
& \multirow{2}{*}{$\alpha$}
& \multicolumn{3}{c!{\vrule width 1.2pt}}{$\cplusds$} 
& \multicolumn{3}{c!{\vrule width 1.2pt}}{$\cplusss$} 
& \multicolumn{2}{c!{\vrule width 1.2pt}}{$\paynt$}
& \multicolumn{3}{c}{$\storm$} \\ [2pt]
\cmidrule{5-15}

& & & 
& $|\hypothesis|$ & iter & time
& $|\hypothesis|$ & iter & time 
& $|\hypothesis|$ & time
& $|\belief|$ & value & time\\
\hline
\multirow{2}{*}{} & \multirow{2}{*}{} & \multirow{2}{*}{} &  
& & &  
& & &
& & & 
& \multirow{2}{*}{} & \multirow{2}{*}{} \\[-6pt]
hallway & 16 & 3 & 0.99 & 25 & 3 & 78.11 & 24 & 3 & 49.32 & 5 & \textbf{1.44} & 3 & 1.00 & 0.01\\
hallway-2 & 24 & 3 & 0.99 & 20 & 4 & \textbf{68.99} & 20 & 3 & 43.41 & - & - & 3 & 1.00 & 0.01\\
hallway-simple-50 & 154 & 4 & 0.99 & 53 & 2 & 138.77 & 53 & 2 & \textbf{22.61} & - & - & 156 & 1.00 & 1.00\\
\midrule\midrule
cards-removed-2-bounded & 9 & 6 & 0.5 & 4 & 1 & 1.59 & 4 & 1 & 1.48 & 1 & \textbf{0.01} & 10 & 1.00 & 0.01\\
cards-removed-3-bounded & 76 & 10 & 0.5 & 7 & 1 & 2.10 & 20 & 2 & 4.04 & 1 & \textbf{0.02} & 103 & 0.97 & 1.00\\
cards-removed-4-bounded & 165 & 12 & 0.5 & 48 & 2 & 12.28 & 29 & 2 & 7.42 & 2 & \textbf{0.16} & 322 & 0.91 & 2.00\\
cards-removed-5-bounded & 306 & 14 & 0.5 & 118 & 2 & 44.74 & 47 & 2 & \textbf{16.13} & - & - & 919 & 0.85 & 8.00\\
\midrule\midrule
cards-added-2-bounded & 35 & 8 & 0.5 & 7 & 1 & 1.93 & 7 & 1 & 1.77 & 1 & \textbf{0.01} & 40 & 0.74 & 0.01\\
cards-added-3-bounded & 94 & 10 & 0.5 & 29 & 4 & 11.80 & 21 & 5 & 10.66 & 2 & \textbf{0.03} & 320 & 0.62 & 1.00\\
cards-added-4-bounded & 197 & 12 & 0.5 & 356 & 9 & 224.87 & 108 & 19 & \textbf{184.11} & - & - & 2526 & 0.56 & 0.01\\ 
cards-added-5-bounded & 356 & 14 & 0.5 & - & - & - & - & - & - & - & - & 19603 & 0.51 & 93.00\\ 
\midrule\midrule
cards-removed-2-unbounded & 13 & 9 & 0.5 & 6 & 1 & 2.74 & 6 & 1 & 2.36 & 1 & \textbf{0.01} & 11 & 0.95 & 0.01\\
 &  &  & 0.2 & 6 & 1 & 2.77 & 6 & 1 & 2.34 & 1 & \textbf{0.01} &  &  & \\
\cmidrule{4-15}
 cards-removed-3-unbounded & 25 & 11 & 0.5 & 15 & 1 & 8.29 & 15 & 1 & 2.49 & 2 & \textbf{2.33} & 32 & 0.85 & 0.01\\
 &  &  & 0.2 & 15 & 1 & 8.30 & 15 & 1 & 2.42 & 1 & \textbf{0.02} &  &  & \\
\cmidrule{4-15}
 cards-removed-4-unbounded & 41 & 13 & 0.5 & 27 & 1 & 13.67 & 21 & 1 & \textbf{4.32} & - & - & 79 & 0.74 & 0.01\\
 &  &  & 0.2 & 27 & 1 & 13.68 & 21 & 1 & 4.35 & 1 & \textbf{0.18} &  &  & \\
\cmidrule{4-15}
 cards-removed-5-unbounded & 61 & 15 & 0.5 & 337 & 4 & 545.20 & 31 & 1 & \textbf{7.75} & - & - & 190 & 0.64 & 0.01\\
 &  &  & 0.2 & 40 & 1 & 28.20 & 31 & 1 & 7.85 & 1 & \textbf{0.01} &  &  & \\
\cmidrule{4-15}
 cards-removed-6-unbounded & 85 & 17 & 0.5 & - & - & - & 63 & 2 & \textbf{39.79} & - & - & 445 & 0.55 & 3.00\\
 &  &  & 0.2 & 34 & 1 & 36.27 & 43 & 1 & \textbf{14.92} & 2 & 188.56 &  &  & \\
\cmidrule{4-15}
 cards-removed-7-unbounded & 113 & 19 & 0.5 & - & - & - & - & - & - & - & - & 1020 & 0.47 & 7.00\\
 &  &  & 0.2 & 217 & 2 & 480.31 & 58 & 1 & \textbf{59.55} & - & - &  &  & \\
\cmidrule{4-15}
 cards-removed-8-unbounded & 145 & 21 & 0.5 & - & - & - & - & - & - & - & - & 2299 & 0.40 & 18.00\\
 &  &  & 0.2 & - & - & - & 60 & 1 & \textbf{88.03} & - & - &  &  & \\
\midrule\midrule
cards-added-2-unbounded & 17 & 9 & 0.5 & 6 & 1 & 5.11 & 7 & 1 & 2.10 & 1 & \textbf{0.01} & 417 & 0.78 & 1.00\\
 &  &  & 0.2 & 6 & 1 & 5.11 & 7 & 1 & 2.13 & 1 & \textbf{0.01} &  &  & \\
 \cmidrule{4-15}
cards-added-3-unbounded & 31 & 11 & 0.5 & 12 & 1 & \textbf{8.74} & 15 & 2 & 11.64 & - & - & 47908 & 0.64 & 183.00\\
 &  &  & 0.2 & 12 & 1 & 8.75 & 11 & 1 & 5.05 & 1 & \textbf{0.04} &  &  & \\
  \cmidrule{4-15}
cards-added-4-unbounded & 49 & 13 & 0.5 & - & - & - & - & - & - & - & - & 4402982 & 0.54 & 28.88\\
 &  &  & 0.2 & 17 & 1 & 10.03 & 19 & 1 & 8.89 & 1 & \textbf{0.44} &  &  & \\
  \cmidrule{4-15}
cards-added-5-unbounded & 71 & 15 & 0.5 & - & - & - & - & - & - & - & - & 10801995 & 0.47 & 217.66\\
 &  &  & 0.2 & 28 & 1 & 23.43 & 28 & 1 & 20.18 & 1 & \textbf{2.28} &  &  & \\
 \cmidrule{4-15}
 cards-added-6-unbounded & 97 & 17 & 0.5 & - & - & - & - & - & - & - & - & 9507680 & 0.40 & 161.72\\
 &  &  & 0.2 & 39 & 1 & 45.89 & 38 & 1 & 39.68 & 1 & \textbf{0.01} &  &  & \\
\cmidrule{4-15}
 cards-added-7-unbounded & 127 & 19 & 0.5 & - & - & - & - & - & - & - & - & 9583831 & 0.32 & 159.36\\
 &  &  & 0.2 & 53 & 1 & 86.34 & 52 & 1 & 76.75 & 1 & 0.01 &  &  & \\
\cmidrule{4-15}
 cards-added-8-unbounded & 161 & 21 & 0.5 & - & - & - & - & - & - & - & - & 9236973 & 0.25 & 152.18\\
 &  &  & 0.2 & 68 & 1 & 148.85 & 68 & 1 & \textbf{141.14} & - & - &  &  & \\
 \midrule\midrule 
cheese-maze & 15 & 8 & 0.5 & 14 & 1 & 7.95 & 12 & 1 & 2.80 & 1 & \textbf{0.01} & 17232927 & 0.99 & 231.64\\
 &  &  & 0.8 & 14 & 1 & 6.46 & 12 & 1 & 2.80 & 1 & \textbf{0.01} &  &  & \\
 \cmidrule{4-15}
cheese-maze-det & 15 & 8 & 0.5 & 9 & 1 & 6.11 & 9 & 1 & 1.52 & 2 & \textbf{0.01 }& 22 & 1.00 & 0.01\\
 &  &  & 0.8 & 9 & 1 & 6.10 & 9 & 1 & 1.52 & 2 & \textbf{0.01} &  &  & \\
 \cmidrule{4-15}
refuel($N=6,E=8$) & 270 & 36 & 0.5 & 3 & 1 & 4.97 & 3 & 1 & 1.46 & 1 & \textbf{0.07} & 26423672 & 0.72 & 288.55\\
 &  &  & 0.8 & - & - & - & - & - & - & 1 & \textbf{0.07} &  &  & \\
 \cmidrule{4-15}
rocks($N=4$) & 332 & 66 & 0.5 & 12 & 2 & 32.62 & 26 & 1 & 15.60 & 1 & \textbf{0.05} & 485 & 1.00 & 7.00\\
 &  &  & 0.8 & - & - & - & 28 & 2 & 34.56 & 1 & \textbf{0.05} &  &  & \\
 \bottomrule
\end{tabular}
\caption{We report here various statistics regarding three tools: \cplusds, \cplusss, \storm, and \paynt. Bold entries indicate the best-performing tool (in terms of time) among those that produce verified FSCs.} 
\label{tab:other-benchmarks-full}
\end{table}

\clearpage
\section{Example: Cards}
\label{app:cards}
\begin{figure}[h]
    \centering
    \begin{subfigure}{0.55\linewidth}
        \centering
        \resizebox{\linewidth}{!}{
        \begin{tikzpicture}[
  vertex/.style = {circle, draw, minimum size=4mm, inner sep=1pt},
  final/.style = {double, double distance=1pt},
  ->, >=Stealth
  ]
\small
\node[vertex] (0) at (0,1) {$\init$};
\draw [dashed, rounded corners, draw opacity = 0.5] (-4.3,0.4) rectangle (-1.7, -2);
\draw [dashed, rounded corners, draw opacity = 0.5] (-1.3,0.4) rectangle (1.3,-2);
\draw [dashed, rounded corners, draw opacity = 0.5] (1.7,0.4) rectangle (4.3,-2);

\node[vertex] (M1) at (-3,0) {$0$};

\node[vertex] (M11) at (-4,-1) {$1$};
\node[vertex] (M12) at (-3,-1) {$2$};
\node[vertex] (M13) at (-2,-1) {$3$};

\node[vertex] (M2) at (0,0) {$0$};

\node[vertex] (M21) at (-1,-1) {$1$};
\node[vertex] (M22) at (0,-1) {$2$};
\node[vertex] (M23) at (1,-1) {$3$};

\node[vertex] (M3) at (3,0) {$0$};

\node[vertex] (M31) at (2,-1) {$1$};
\node[vertex] (M32) at (3,-1) {$2$};
\node[vertex] (M33) at (4,-1) {$3$};

\node[opacity=0.5] at (-3,-1.7) {Back to $0$};
\node[opacity=0.5] at (0,-1.7) {Back to $0$};
\node[opacity=0.5] at (3,-1.7) {Back to $0$};


\draw[->] (0,1.5) -- (0);

\draw[->] (0) -- (M1);
\draw[->] (0) -- (M2);
\draw[->] (0) -- (M3);

\node at (-1.5, 0.8) {$\frac{1}{3}$};
\node at (-0.2, 0.5) {$\frac{1}{3}$};
\node at (1.5, 0.8) {$\frac{1}{3}$};

\draw[->] (M1) -- (M12) node[midway,left] {$\frac{1}{2}$};
\draw[->] (M1) -- (M13) node[near start,right] {$\frac{1}{2}$};

\draw[->] (M2) -- (M21) node[near start,left] {$\frac{1}{2}$};
\draw[->] (M2) -- (M23) node[near start,right] {$\frac{1}{2}$};

\draw[->] (M3) -- (M31) node[near start,left] {$\frac{1}{2}$};
\draw[->] (M3) -- (M32) node[midway,right] {$\frac{1}{2}$};

\draw[->] (M11) -- (-4,-1.5);
\draw[->] (M12) -- (-3,-1.5);
\draw[->] (M13) -- (-2,-1.5);

\draw[->] (M21) -- (-1,-1.5);
\draw[->] (M22) -- (0,-1.5);
\draw[->] (M23) -- (1,-1.5);

\draw[->] (M31) -- (2,-1.5);
\draw[->] (M32) -- (3,-1.5);
\draw[->] (M33) -- (4,-1.5);

\end{tikzpicture}}
        \caption{}
        \label{fig:card-removed-3}
    \end{subfigure}
    \hfill
    \begin{subfigure}{0.44\linewidth}
        \centering
        \resizebox{\linewidth}{!}{
        \begin{tikzpicture}
    [state/.style={draw, circle, inner sep=2pt, minimum size=4mm}]
    
    \begin{scope}[every node/.style={state}]
      \node (p0) at (-0.5, 0) {}; 
      \node (p1) at (1, 0) {}; 
      \node (p21) at (3, 1) {}; 
      \node (p22) at (3, -0.3) {}; 
      \node (p23) at (3, -1) {}; 
      \node (p3) at (5, 0) {};   
    \end{scope}

    \begin{scope}[->, >=stealth]
      \draw (-1, 0) to (p0); 
      \draw (p0) to (p1); 
      \draw (p1) to (p21); 
      \draw (p1) to (p22); 
      \draw (p1) to (p23); 
      \draw [rounded corners] (p21) to (5,1) to (p3); 
      \draw [rounded corners] (p21) to (2.5, 1.5) to (3.5, 1.5) to (p21); 
      \draw (p22) to (p3); 
      \draw [rounded corners] (p22) to (2.5, 0.2) to (3.5, 0.2) to (p22);
      \draw [rounded corners] (p23) to (5,-1) to (p3); 
      \draw [rounded corners] (p23) to (2.5, -1.5) to (3.5, -1.5) to (p23);
      \draw [rounded corners] (p3) to (5.5, 0.5) to (5.5, -0.5) to (p3);
    \end{scope}	

    \node at (0.2, 0.2) {\scriptsize $- / draw$}; 
    \node at (1.7, 0.7) {\scriptsize $1 / draw$}; 
    \node at (2, 0) {\scriptsize $2 / draw$}; 
    \node at (3, 1.7) {\scriptsize $1 / draw$};
    \node at (3, 0.4) {\scriptsize $2 / draw$};
    \node at (3, -1.7) {\scriptsize $3 / draw$};         
    \node at (1.7, -0.7) {\scriptsize $3 / draw$}; 
    \node at (4.2, 1.2) {\scriptsize $2 / guess_3$}; 
    \node at (4.2, 0.8) {\scriptsize $3 / guess_2$}; 
    \node at (4.1, 0.1) {\scriptsize $1 / guess_3$}; 
    \node at (4.2, -0.3) {\scriptsize $3 / guess_1$}; 
    \node at (4.2, -0.8) {\scriptsize $1 / guess_2$}; 
    \node at (4.2, -1.2) {\scriptsize $2 / guess_1$}; 
    \node at (6.1, 0) {\scriptsize $- / draw$};

    \end{tikzpicture}}
        \caption{}
        \label{fig:policy-for-card-removed}
    \end{subfigure}
    \caption{A POMDP representing~\Cref{ex:3cards} is depicted in~\cref{fig:card-removed-3}. Labels on states denote their corresponding observations. Only the transitions for the ${\sf draw}$ action from each state are depicted. The arrows from the terminal states in each block represent transitions back to the observation $0$ of the {corresponding} block.
    \cref{fig:policy-for-card-removed} represents a winning policy for the threshold-safety policy for {this} POMDP.}
    \label{fig:card-removed-3-and-fsc}
\end{figure}

\begin{example}[Removing Cards-$3$]
\label{ex:3cards}
Consider the following card example, adapted from~\cite{chatterjee2025value}, depicted in \Cref{fig:card-removed-3}. We have a $3$-card deck, numbered $1,2,3$, from which one card, initially chosen at random with probability $\frac{1}{3}$, is missing. The possible actions are: ${\sf draw}$, and $({\sf guess}_{i})_{i\in \{1,2,3\}}$. Only the transitions corresponding to the ${\sf draw}$ action from each state are depicted in the figure. The observations are: the numbers on the cards, the initial observation $\init$, and a dummy observation $0$. Thus, the agent does not directly observe which block 
(\emph{i.e.}, which missing-card configuration) it is in, but instead observes the cards drawn (at random) from the deck.
From the bottom of each block, if the ${\sf draw}$ action is chosen, the system returns to the root of the corresponding block (represented as ``Back to $0$''). In the $i$-th block, from the $0$ observation, on action ${\sf draw}$, there is a transition to each of the present cards with probability $\frac{1}{2}$. When the agent becomes confident about which card is missing, it can play the corresponding ${\sf guess}_i$. Guessing the correct card 
leads to a winning state, while choosing it incorrectly leads to a separate sink state (these two states are not shown in the figure). The objective of the agent is to correctly guess the missing card.

\end{example}

    Figure~\ref{fig:policy-for-card-removed} depicts a policy (as an FSC) for Example~\ref{ex:3cards}. 
    After the first card is drawn (and hence the block is determined) this policy waits (by repeatedly suggesting to ${\sf draw}$) while the same card is shown. 
    As soon as it observes a different card it can then {correctly guess} the missing card.
    Note that, this policy wins the game with probability $1$. 
\end{document}